\documentclass[sigconf]{acmart}

\setcopyright{none}
\acmConference{MILETS}{2026}{}
\acmISBN{}
\acmDOI{}
\renewcommand\footnotetextcopyrightpermission[1]{}
\usepackage{booktabs}
\usepackage{multirow}
\usepackage{graphicx}
\usepackage{dsfont}
\usepackage{enumitem}
\usepackage{amsmath}

\usepackage{amssymb}
\usepackage{amsthm}
\usepackage{tikz}
\usepackage{adjustbox}
\usepackage{float}  
\usetikzlibrary{shapes.geometric, arrows.meta, positioning, calc}

\theoremstyle{definition}
\newtheorem{definition}{Definition}
\theoremstyle{plain}
\newtheorem{fact}{Fact}
\newtheorem{proposition}{Proposition}

\tikzstyle{stepbox} = [rectangle, rounded corners=4pt, minimum width=3.4cm, minimum height=0.7cm,
                       text centered, draw=black!60, fill=blue!10, font=\small]
\tikzstyle{decision} = [diamond, aspect=2.4, minimum width=2.0cm, minimum height=0.6cm,
                        text centered, draw=black!60, fill=yellow!15, font=\small, inner sep=1pt]
\tikzstyle{tryterm} = [rectangle, rounded corners=4pt, minimum width=3.4cm, minimum height=0.9cm,
                       text centered, text width=3.2cm, draw=black!60, fill=green!20, font=\small, align=center]
\tikzstyle{skipterm} = [rectangle, rounded corners=4pt, minimum width=3.4cm, minimum height=0.9cm,
                        text centered, text width=3.2cm, draw=black!60, fill=red!15, font=\small, align=center]
\tikzstyle{inconterm} = [rectangle, rounded corners=4pt, minimum width=3.4cm, minimum height=0.9cm,
                         text centered, text width=3.2cm, draw=black!60, fill=gray!20, font=\small, align=center]
\tikzstyle{flowarrow} = [thick, -{Latex[length=2mm]}]

\begin{document}

\title{When Does Context Routing Help? A Systematic Study of Multi-Modal Fusion in Time Series Forecasting}

\author{Ruizhe Zhou}
\affiliation{%
  \institution{Amazon.com}
  \city{Seattle}
  \country{USA}
}
\email{rexzhou@amazon.com}

\author{Gaoyuan Du}
\affiliation{%
  \institution{University of Tennessee, Knoxville}
  \city{Knoxville}
  \country{USA}
}
\email{gdu3@vols.utk.edu}

\author{Xiaoyang Liu}
\affiliation{%
  \institution{Amazon.com}
  \city{Seattle}
  \country{USA}
}
\email{lxaoya@amazon.com}

\author{Haoqi Yao}
\affiliation{%
  \institution{WorkMagic}
  \city{New York}
  \country{USA}
}
\email{byahak@gmail.com}

\author{Deepayan Chakrabarti}
\affiliation{%
  \institution{University of Texas at Austin}
  \city{Austin}
  \country{USA}
}
\email{deepay@utexas.edu}

\author{Jiating Lin}
\affiliation{%
  \institution{Amazon.com}
  \city{Seattle}
  \country{USA}
}
\email{ljtlinus@amazon.com}

\author{Yixuan Shen}
\affiliation{%
  \institution{Amazon.com}
  \city{Chicago}
  \country{USA}
}
\email{jolina.shen@gmail.com}

\begin{abstract}
Multi-modal time series forecasting methods integrate auxiliary context (text, tabular features, financial signals) into temporal predictions through increasingly sophisticated fusion mechanisms. A growing body of work reports substantial gains, yet it is often unclear whether they reflect genuine use of the context or incidental architectural effects (extra parameters, regularization, residual paths). We ask a narrower, checkable question: \emph{when can auxiliary context help a forecaster at all?}

We identify two dataset-level conditions that must both hold: (1) the target is not dominated by a last-value shortcut (low autocorrelation $\rho_h$), and (2) the context carries information about the target beyond history (non-zero conditional mutual information $\delta = I(C; X_{t+h}\mid X_t)$; when $\delta=0$ no predictor can benefit---a distribution-free result). Through controlled experiments on MoME (a 14.3B-parameter mixture-of-experts model, 6 datasets, 10 seeds) and four additional fusion mechanisms implemented within a single-backbone testbed (5 datasets), we find that when both conditions hold, text-conditioned expert modulation contributes a sizeable MSE reduction (HealthUS +51\%, Environment +41.5\%, SocialGood +32\%, HealthAFR +29\%); when either fails, the contribution collapses to the capacity floor of the modulation pathway and carries no context-attributable signal.

We establish causality through two interventions: adding a shortcut to MoME suppresses routing contribution by 77--93\% across 3 datasets; progressively corrupting context quality drives the context-specific benefit from +44\% to negative. We validate the autocorrelation component of our diagnostic on 27 Monash Archive datasets (Spearman $r = 0.888$, $p < 0.0001$). We provide a calibrated pre-training diagnostic (TRY\_FUSION / SKIP\_FUSION / INCONCLUSIVE) that, on the datasets we test, yields no false positives in well-powered settings. We are explicit about the asymmetry of our evidence: the negative (SKIP) arm is broadly reliable, while the large positive magnitudes come from a single model family (MoME) and are corroborated only in \emph{direction} by the testbed.
\end{abstract}

\keywords{time series forecasting, multi-modal fusion, context routing, mutual information, mixture of experts}

\maketitle

\section{Introduction}

\subsection{Background and Motivation}

Time series forecasting is a fundamental task in domains ranging from finance and energy to healthcare and climate science. Recent years have seen substantial progress through Transformer-based architectures~\cite{zhou2021informer,wu2021autoformer,nie2023patchtst,liu2024itransformer}, simple linear baselines~\cite{zeng2023dlinear}, and large-scale time series foundation models~\cite{ansari2024chronos,das2024timesfm,woo2024moirai,goswami2024moment}. Recently, \emph{multi-modal} forecasting methods have emerged that augment temporal data with auxiliary context (news text, weather reports, economic indicators, event calendars) with the goal of improving prediction accuracy~\cite{survey2025mmts}.

These methods employ various \emph{fusion mechanisms} to integrate context into the forecasting pipeline:
\begin{itemize}[leftmargin=*]
    \item \textbf{Output-level fusion}~\cite{liu2024timemmd}: separate models process time series and context independently; their predictions are combined (e.g., added) at the output layer.
    \item \textbf{Cross-attention alignment}~\cite{liu2025timecma}: time series and context representations interact through cross-attention in a shared latent space.
    \item \textbf{Text-as-variable}~\cite{li2025tats}: text embeddings are concatenated with temporal patches as additional input variables.
    \item \textbf{Gating}~\cite{survey2025mmts,perez2018film}: context modulates temporal features through learned gates (e.g., FiLM-style affine transformations).
    \item \textbf{Mixture-of-experts (MoE) routing}~\cite{mome2025,shazeer2017outrageously}: context conditions the selection or modulation of specialized expert sub-networks.
\end{itemize}

A common narrative in the literature is that more sophisticated fusion leads to better performance. However, reported gains vary dramatically across datasets and settings, and it remains unclear whether improvements stem from the fusion mechanism exploiting context information, or from incidental architectural benefits (e.g., additional parameters, regularization effects, or residual connections).

\subsection{The Problem}

We address a simple but important question: \textbf{under what conditions does auxiliary context provide genuine value to a time series forecaster?}

By ``genuine value,'' we mean improvement attributable to the \emph{context information} itself, not architectural side effects. This distinction matters for practitioners: if fusion helps only because of better regularization (not context), then simpler architectural changes would achieve the same benefit at lower cost.

\subsection{Key Findings}

Through controlled experiments, we identify two conditions that must both hold:

\begin{enumerate}[leftmargin=*]
    \item \textbf{No competing shortcut.} Many forecasting architectures include a ``last-value shortcut'': a residual connection that directly copies the most recent observation as a baseline prediction. The phenomenon mirrors the broader notion of \emph{shortcut learning}~\cite{geirhos2020shortcut}, in which models exploit easy predictive signals at the expense of more nuanced ones. When present, this shortcut captures most of the predictable variance on autocorrelated data, leaving little room for fusion to contribute. We show that adding a shortcut to MoME suppresses its routing contribution by 77--93\% across 3 datasets (HealthUS, SocialGood, HealthAFR).
    
    \item \textbf{Statistically informative context.} Context must contain information about the target beyond what the recent history already provides. We operationalize this via a mutual information (MI) permutation test on the residual $R = X_{t+h} - \hat\rho_h X_t$: if context MI is non-significant ($p > 0.05$) in a well-powered setting, no fusion mechanism helps, regardless of model capacity. By Fact~\ref{fact:null}, when the true conditional MI is zero this holds for \emph{any} predictor and distribution.
\end{enumerate}

When both conditions hold, text modulation contributes a sizeable MSE reduction on the 14.3B-parameter MoME model (HealthUS +51\%, Environment +41.5\%, SocialGood +32\%, HealthAFR +29\%; all 10-seed). When either fails, the contribution collapses to the modulation pathway's capacity floor. We stress at the outset that these large \emph{magnitudes} come from a single model family; the testbed corroborates the \emph{direction} but not the size.

\subsection{Contributions}

\begin{enumerate}[leftmargin=*]
    \item \textbf{Causal decomposition.} We isolate the fusion mechanism's contribution from architectural confounds through three controlled interventions: toggling text modulation (same model, different datasets), adding/removing shortcuts (replicated on 3 datasets with 77--93\% suppression), and corrupting context quality (same dataset, degraded input). Each intervention demonstrates a causal relationship, not merely a correlation.
    
    \item \textbf{Multi-scale validation.} We validate the diagnostic's predictions at two scales: (i) within a controlled testbed, 4 fusion mechanisms confirm that MI-non-significant datasets never show positive context benefit; (ii) across 27 Monash Archive datasets, autocorrelation at the prediction horizon predicts shortcut dominance with Spearman $r = 0.888$ ($p < 0.0001$).
    
    \item \textbf{Calibrated pre-training diagnostic.} We provide a 3-output diagnostic (TRY\_FUSION / SKIP\_FUSION / INCONCLUSIVE) based on autocorrelation and MI significance, with formal power characterization. On the datasets we test, it yields no false positives in well-powered settings and correctly outputs INCONCLUSIVE (rather than a misleading SKIP) when statistical power is insufficient.
\end{enumerate}

\paragraph{Scope.} Our diagnostic is modality-agnostic: it operates on the context as a feature vector $C$ and makes no assumption about its source. The datasets we evaluate provide context as embeddings of textual side information (Time-MMD) or financial news (FinMultiTime); we do not test image or audio modalities. We use ``fusion'' throughout in this embedding-level sense.

\section{Theoretical Foundation}
\label{sec:theory}

We establish the information-theoretic basis for our diagnostic, drawing on standard results in information theory~\cite{cover2006elements}. The basis of our approach is that two dataset properties (autocorrelation and context informativeness) jointly determine the \emph{maximum possible} benefit from fusion.

\subsection{Setup and Definitions}

Consider a stationary time series $\{X_t\}$ with marginal variance $\sigma^2$. We aim to predict $X_{t+h}$ (the value $h$ steps ahead) given:
\begin{itemize}
    \item $X_t$: the most recent observation (``history'')
    \item $C$: an auxiliary context variable (e.g., text embedding, tabular features)
\end{itemize}

\begin{definition}[$h$-step autocorrelation]
$\rho_h = \text{Corr}(X_t, X_{t+h})$ measures how predictable the future is from the past alone. When $\rho_h \approx 1$ (e.g., daily stock prices), simply repeating the last value is already a strong prediction.
\end{definition}

\begin{definition}[Conditional mutual information]
$\delta = I(C; X_{t+h} \mid X_t)$ (in bits) measures how much \emph{additional} information context $C$ provides about the target $X_{t+h}$, beyond what history $X_t$ already reveals. When $\delta = 0$, context is conditionally independent of the target given history, and it cannot help any predictor.
\end{definition}

\begin{definition}[Last-value shortcut]
A model component that directly uses $X_t$ (or a linear function of recent values) as a baseline prediction: $\hat{X}_{t+h}^{\text{shortcut}} = f_{\text{linear}}(X_t)$. This achieves MSE $= \sigma^2(1-\rho_h^2)$ for the optimal linear predictor.
\end{definition}

\subsection{When Can Context Help? (Distribution-Free Results)}

\begin{fact}[Null condition --- distribution-free]
\label{fact:null}
If $\delta = I(C; X_{t+h} \mid X_t) = 0$, then $C$ is conditionally independent of $X_{t+h}$ given $X_t$. In this case, \emph{no predictor}, regardless of capacity, architecture, or training procedure, can benefit from $C$. Formally: $\text{MMSE}(X_{t+h} \mid X_t, C) = \text{MMSE}(X_{t+h} \mid X_t)$.
\end{fact}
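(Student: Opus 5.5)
The plan has two steps: first show that $\delta = 0$ forces conditional independence, then use that to show the optimal predictor ignores $C$, so the MMSE is unchanged.

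\textbf{Step 1: conditional independence.} Write the conditional mutual information as an expected Kullback--Leibler divergence:
\[
I(C; X_{t+h}\mid X_t) = \mathbb{E}_{X_t}\bigl[ D_{\mathrm{KL}}\bigl(P_{(C,X_{t+h})\mid X_t} \,\big\|\, P_{C\mid X_t}\otimes P_{X_{t+h}\mid X_t}\bigr)\bigr].
\]
Each term inside the expectation is nonnegative (Gibbs' inequality, as in~\cite{cover2006elements}). So $\delta=0$ forces the divergence to vanish for $P_{X_t}$-almost every value of $X_t$. Hence the joint conditional law factorizes almost surely, which is exactly $C \perp X_{t+h}\mid X_t$.

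To be distribution-free, I will phrase this with regular conditional distributions rather than densities. This covers discrete, continuous and mixed $C$, such as high-dimensional embeddings. I expect this measure-theoretic bookkeeping to be the only real obstacle. I will handle it by defining $I$ as the supremum over finite partitions (Dobrushin's definition), so it applies on general standard Borel spaces. That definition also makes the equivalence between $I=0$ and conditional independence a standard result that can be cited.

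\textbf{Step 2: the MMSE claim.} I will assume $\mathbb{E}[X_{t+h}^2]<\infty$, which holds since the marginal variance $\sigma^2$ is finite. Then the MMSE given any $\sigma$-algebra $\mathcal{G}$ is attained by $\mathbb{E}[X_{t+h}\mid\mathcal{G}]$. Conditional independence gives
\[
\mathbb{E}[X_{t+h}\mid X_t, C] = \mathbb{E}[X_{t+h}\mid X_t] \quad \text{a.s.}
\]
To verify this, integrate against indicator test functions $\mathbf{1}_A(X_t)\mathbf{1}_B(C)$ and use the factorized conditional law from Step 1. A $\pi$-$\lambda$ argument then extends the identity to all of $\sigma(X_t, C)$. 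Since the two optimal predictors coincide almost surely, their mean squared errors are equal, so $\mathrm{MMSE}(X_{t+h}\mid X_t, C)=\mathrm{MMSE}(X_{t+h}\mid X_t)$.

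\textbf{Step 3: no predictor benefits.} For any measurable predictor $g(X_t, C)$, whatever its capacity, architecture or training procedure, we have
\[
\mathbb{E}[(X_{t+h}-g)^2] \ge \mathrm{MMSE}(X_{t+h}\mid X_t, C) = \mathrm{MMSE}(X_{t+h}\mid X_t).
\]
Therefore $g$ can never beat the best history-only predictor. I will also remark that the argument holds for any loss, not just squared error. Under conditional independence, the entire conditional law of $X_{t+h}$ given $(X_t, C)$ depends on $X_t$ alone, so the Bayes-optimal decision under any loss uses only $X_t$.
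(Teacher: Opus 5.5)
Your proposal is correct and follows the chain the paper relies on, $\delta=0 \Rightarrow C\perp X_{t+h}\mid X_t \Rightarrow \mathbb{E}[X_{t+h}\mid X_t,C]=\mathbb{E}[X_{t+h}\mid X_t]$ a.s. The paper asserts this as a standard information-theoretic fact (citing Cover and Thomas) without writing out a proof, so your version simply supplies the measure-theoretic details. Your Step~3 also correctly compares any $g(X_t,C)$ against the \emph{best} history-only predictor, which is the only sense in which ``no predictor can benefit'' holds; a capacity-restricted class can still gain from a $C$ that is a deterministic function of $X_t$ (e.g.\ $C=X_t^2$ for a linear model) even though $\delta=0$.
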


\noindent This is the theoretical foundation of our MI permutation test: if we cannot reject $\delta = 0$, fusion is provably futile.

\begin{fact}[Shortcut ceiling --- distribution-free]
\label{fact:ceiling}
The best linear predictor from $X_t$ alone achieves $\text{MSE}_{\text{shortcut}} = \sigma^2(1-\rho_h^2)$. Under Gaussianity, this equals the MMSE (the linear predictor is optimal). For non-Gaussian data, the MMSE may be strictly lower (nonlinear predictors can do better), but the linear shortcut MSE still upper-bounds the gap that context can fill: any improvement from context is at most $\sigma^2(1-\rho_h^2)$. When $\rho_h \to 1$, even this upper bound vanishes, leaving negligible room for context regardless of distribution.
\end{fact}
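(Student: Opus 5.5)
The plan is to prove the three claims of Fact~\ref{fact:ceiling} in order. First we compute the MSE of the best linear predictor from $X_t$. Then we identify when it coincides with the MMSE. Finally we turn the linear MSE into a ceiling on any context-driven improvement. Throughout, write $\mu$ for the common mean and use stationarity, so that $\mathrm{Var}(X_t)=\mathrm{Var}(X_{t+h})=\sigma^2$ and $\mathrm{Cov}(X_t,X_{t+h})=\rho_h\sigma^2$.

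\emph{Step 1: the shortcut MSE.} Among predictors $a+bX_t$ we minimize $\mathbb{E}[(X_{t+h}-a-bX_t)^2]$. Setting the derivatives to zero gives the normal equations. Their solution is $b^\ast=\mathrm{Cov}(X_t,X_{t+h})/\mathrm{Var}(X_t)=\rho_h$ and $a^\ast=\mu(1-\rho_h)$. Substituting back gives
\[
\sigma^2-2\rho_h\cdot\rho_h\sigma^2+\rho_h^2\sigma^2=\sigma^2(1-\rho_h^2).
\]
This step is routine.

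\emph{Step 2: Gaussian optimality.} Suppose $(X_t,X_{t+h})$ is jointly Gaussian. Then the conditional mean $\mathbb{E}[X_{t+h}\mid X_t]$ is affine in $X_t$ with exactly the coefficients from Step 1. The MMSE estimator is the conditional mean, so the MMSE equals $\sigma^2(1-\rho_h^2)$. For non-Gaussian data we only need $\text{MMSE}(X_{t+h}\mid X_t)\le\sigma^2(1-\rho_h^2)$. This holds because the conditional mean minimizes MSE over all measurable functions of $X_t$, and affine functions are among them.

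\emph{Step 3: the ceiling on context gains.} For any predictor $g(X_t,C)$, let the improvement over the shortcut be $\text{MSE}_{\text{shortcut}}-\text{MSE}(g)$. Every MSE is nonnegative, so this improvement is at most $\text{MSE}_{\text{shortcut}}=\sigma^2(1-\rho_h^2)$. The same holds for the sharper gap $\text{MMSE}(X_{t+h}\mid X_t)-\text{MMSE}(X_{t+h}\mid X_t,C)$. Here we combine Step 2 with the fact that conditioning on more variables cannot increase MMSE, which gives
\[
0\le \text{MMSE}(X_{t+h}\mid X_t)-\text{MMSE}(X_{t+h}\mid X_t,C)\le \text{MMSE}(X_{t+h}\mid X_t)\le\sigma^2(1-\rho_h^2).
\]
Letting $\rho_h\to1$ sends the bound to $0$ for any distribution.

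The main obstacle is interpretive rather than computational. The statement calls the result ``distribution-free,'' but only Steps 1 and 3 are. Step 2's equality needs joint Gaussianity, and the whole argument needs finite second moments and stationarity of the pair $(X_t,X_{t+h})$. I would state these standing assumptions explicitly. I would also make precise which ``gap'' is meant: the improvement over the linear shortcut or over the MMSE from $X_t$. The bound is trivial, coming only from nonnegativity, but it holds for both.
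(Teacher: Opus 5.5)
Your argument is correct and is the standard reasoning the paper relies on. The paper gives this Fact no dedicated proof. Its only overlapping derivation is the Gaussian step of the RBU proof in Appendix~\ref{app:rbu_proof}, which obtains $\mathrm{MMSE}(X_{t+h}\mid X_t)=\sigma^2(1-\rho_h^2)$ from the homoscedastic Gaussian conditional variance; you reach the same equality by noting that the conditional mean coincides with the best affine predictor, which needs only an affine regression function. Your observation that the ceiling follows from nonnegativity of MSE plus monotonicity of MMSE under added conditioning is exactly what the paper leaves implicit; note that the vanishing as $\rho_h\to 1$ holds with $\sigma^2$ fixed, i.e.\ relative to the marginal variance.
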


\subsection{Quantifying the Opportunity (Gaussian Benchmark)}

\begin{proposition}[Routing Benefit Upper Bound --- RBU]
\label{prop:rbu}
Under joint Gaussianity of $(X_t, X_{t+h}, C)$, the maximum MSE reduction from adding context is exactly:
\begin{equation}
    \text{RBU} = \sigma^2(1-\rho_h^2)(1-2^{-2\delta})
\end{equation}
This decomposes into two factors: $(1-\rho_h^2)$ is the residual variance \emph{available} for context to explain (the ``room'' left by the shortcut), and $(1-2^{-2\delta})$ is the fraction of that room that context \emph{can} explain. (Proof in Appendix~\ref{app:rbu_proof}.)
\end{proposition}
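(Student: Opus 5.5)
We prove the statement by computing the two conditional variances that make up the RBU. Under joint Gaussianity, conditional expectations are linear and conditional variances are deterministic (they do not depend on the conditioning values). So the MMSE from any set of Gaussian regressors is the Schur-complement conditional variance. Hence
\[
\text{RBU} = \text{MMSE}(X_{t+h}\mid X_t) - \text{MMSE}(X_{t+h}\mid X_t, C) = \mathrm{Var}(X_{t+h}\mid X_t) - \mathrm{Var}(X_{t+h}\mid X_t,C).
\]
The plan is to express each term in closed form and then write their ratio in terms of $\delta$.

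\textbf{First term.} By stationarity, $\mathrm{Var}(X_t)=\mathrm{Var}(X_{t+h})=\sigma^2$ and the correlation is $\rho_h$. The standard bivariate Gaussian formula gives $\mathrm{Var}(X_{t+h}\mid X_t)=\sigma^2(1-\rho_h^2)$. This agrees with the shortcut MSE in Fact~\ref{fact:ceiling}, since the linear predictor is optimal here.

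\textbf{Second term.} Write $v_1=\mathrm{Var}(X_{t+h}\mid X_t)$ and $v_2=\mathrm{Var}(X_{t+h}\mid X_t,C)$. Conditional on $X_t$, the pair $(X_{t+h},C)$ is jointly Gaussian with a covariance that does not depend on $X_t$. We therefore use the chain rule in the form
\[
\delta = I(C;X_{t+h}\mid X_t)= h(X_{t+h}\mid X_t)-h(X_{t+h}\mid X_t,C).
\]
For a scalar Gaussian the differential entropy is $\tfrac12\log_2(2\pi e\, v)$. This gives $\delta=\tfrac12\log_2(v_1/v_2)$, so $v_2=v_1 2^{-2\delta}$. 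The argument allows $C$ to be a vector, since only the scalar target's conditional entropy enters. Subtracting,
\[
\text{RBU}=v_1-v_2=\sigma^2(1-\rho_h^2)(1-2^{-2\delta}).
\]

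\textbf{Interpretation and the word ``exactly.''} The decomposition follows immediately from the product form. The factor $1-2^{-2\delta}$ equals the squared partial correlation between $C$ and $X_{t+h}$ given $X_t$ in the scalar case. ``Exactly'' holds because the conditional mean is the MMSE-optimal predictor, and under Gaussianity that optimal predictor is linear in $(X_t,C)$, so the bound is attained by the linear predictor that includes $C$.

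\textbf{Main obstacle.} The step needing most care is justifying that the conditional-entropy difference equals $\tfrac12\log_2(v_1/v_2)$ uniformly. This relies on the Gaussian fact that conditional variances do not depend on the conditioning values. I would first establish this via the Schur complement of the joint covariance. I would also handle the degenerate cases: $v_2=0$ gives $\delta=\infty$ and RBU $=v_1$, which the formula recovers as a limit, and $\rho_h^2=1$ gives RBU $=0$.
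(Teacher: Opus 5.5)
Your proposal is correct and follows essentially the same route as the paper's proof (Appendix~\ref{app:rbu_proof}). Both identify each MMSE with a homoscedastic Gaussian conditional variance, take $\mathrm{Var}(X_{t+h}\mid X_t)=\sigma^2(1-\rho_h^2)$, telescope $\delta$ via the Gaussian entropy formula to get $v_2=v_1 2^{-2\delta}$, and subtract. Your extra remarks on vector $C$, the partial-correlation reading of $1-2^{-2\delta}$, and the degenerate cases are accurate additions the paper omits, but they do not change the argument.
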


\noindent The RBU formula is exact only under joint Gaussianity (proof in Appendix~\ref{app:rbu_proof}). Off Gaussianity, two separate caveats apply, which we keep distinct. (i) \emph{Formula:} the population RBU expression is in general \emph{neither an upper nor a lower bound} on the true achievable MMSE reduction, because the Gaussian entropy--variance identity it relies on no longer holds. (ii) \emph{Estimator:} our plug-in uses a finite-sample Kraskov estimate $\hat\delta$, which under-detects nonlinear dependence (Finding~\ref{sec:nonlinear}) and is therefore biased \emph{low}, so $\widehat{\text{RBU}}$ tends to underestimate the population RBU. We therefore use RBU only as a Gaussian-case sanity check for the relative magnitude of routing benefit, not as a bound. The two distribution-free components of our diagnostic are Fact~\ref{fact:null} (no benefit when $\delta=0$) and Fact~\ref{fact:ceiling} (shortcut ceiling as $\rho_h\to 1$).

\subsection{MI Estimation and Permutation Test}

We estimate $\delta$ using a three-step procedure:
\begin{enumerate}
    \item Compute the residual $R = X_{t+h} - \hat{\rho}_h X_t$ (removing the linear shortcut component).
    \item Project the high-dimensional context $C \in \mathbb{R}^d$ to its top-20 PCA components (reducing dimensionality while preserving most variance).
    \item Estimate $\hat{\delta} = \hat{I}(C_{\text{PCA}}; R)$ using the Kraskov k-nearest-neighbor estimator~\cite{kraskov2004estimating} with $k=3$, converted to bits.
\end{enumerate}

The k-NN estimator is \emph{nonparametric}: it detects nonlinear dependencies without assuming any functional form between context and target. This is critical: we show empirically (Section~\ref{sec:nonlinear}) that context-target relationships are fundamentally nonlinear, making linear diagnostics (e.g., cross-validated linear regression) completely uninformative. Alternative MI estimators (e.g., MINE~\cite{belghazi2018mine} and variational bounds~\cite{poole2019variational}) trade tractability for flexibility; we choose Kraskov k-NN for its established convergence properties and simple permutation-test integration.

\paragraph{Significance testing.} To determine whether $\hat{\delta}$ is statistically distinguishable from zero, we perform a permutation test: shuffle context rows (breaking any dependence with the target), re-estimate MI, and repeat 200 times. The $p$-value is the fraction of null MI values $\geq$ observed MI. By Fact~\ref{fact:null}, if the true $\delta = 0$, the conclusion ``fusion is futile'' holds for \emph{any} distribution.

\paragraph{Power limitation.} The k-NN MI estimator requires $n \gtrsim O(d_{\text{eff}} / \delta^2)$ samples with unique context to reliably detect effect size $\delta$~\cite{berrett2019nonparametric}. On small datasets ($n < 500$) or with low context diversity (few unique embeddings reused cyclically), the test may lack power, producing false negatives. It does \emph{not} produce false positives: in the well-powered settings we test, non-significant MI reliably predicts negligible routing benefit.

\paragraph{Conditioning on a single lag.} Our residual $R = X_{t+h} - \hat\rho_h X_t$ removes only the \emph{first-order} linear dependence on the most recent observation. If the series carries longer-memory structure (seasonality, higher-order autoregression) that a richer history $X_{t-k:t}$ would capture, that predictable variance remains in $R$, and context that merely correlates with it can inflate $\hat\delta$. Our test therefore assesses informativeness \emph{beyond a last-value baseline}, not beyond an optimal history-based predictor; on strongly seasonal data the TRY arm may overstate context value. The distribution-free null (Fact~\ref{fact:null}) is unaffected---when context is conditionally independent of the target it remains so under any history---but conditioning the residual on a multi-lag history is a natural and recommended extension for series with strong seasonality.

\section{Experimental Setup}

\subsection{Datasets}

We evaluate on 8 datasets spanning diverse domains, autocorrelation profiles, and sample sizes:

\begin{itemize}[leftmargin=*]
    \item \textbf{Time-MMD} (6 sub-datasets): HealthUS (quarterly US health metrics, $n$=491, $\rho$=0.77), HealthAFR (African health surveillance, $n$=500), Energy (weekly energy prices, $n$=1059, $\rho$=0.99), Environment (environmental monitoring, $n$=1597, $\rho$=0.38), SocialGood (social indicators, $n$=363, $\rho$=0.62), Web (web traffic, $n$=1068, $\rho$=0.12). All include text context (384-dimensional sentence embeddings).
    
    \item \textbf{FinMultiTime}~\cite{chen2025finmultitime}: 50 S\&P500 stocks with daily prices ($\rho > 0.99$) and news embeddings. Represents the extreme high autocorrelation regime where shortcuts dominate.
\end{itemize}

\subsection{Models}

\paragraph{MoME~\cite{mome2025}.} A state-of-the-art multi-modal forecasting model built on Qwen1.5-MoE-A2.7B (14.3B total parameters). MoME uses a mixture-of-experts architecture where each expert processes time series patches. The mechanism we ablate is \textbf{Expert-level Language Modulation (EiLM)}: after each expert produces its output, a FiLM-style layer~\cite{perez2018film} applies text-conditioned affine transformation ($\gamma \cdot x + \beta$, where $\gamma, \beta$ are derived from text tokens). Toggling the \texttt{--modulation} flag enables/disables EiLM (and its associated instructor QueryPool) while keeping expert structure and routing weights unchanged ($\sim$74K modulation parameters out of 14.3B, $<$0.001\%). Note that MoME has \emph{no built-in last-value shortcut}: the MoE pathway is the primary prediction mechanism.

\paragraph{Routing testbed.} A controlled ablation tool (not a proposed method) that serves as a \emph{mechanistic probe}: while MoME demonstrates the magnitude of routing benefits in realistic settings, the testbed isolates specific mechanisms (shortcut presence, routing on/off) under controlled conditions. We implement it using a PatchTransformer backbone (2 layers, $d$=64) with: (a) toggleable sparse routing via entmax~\cite{peters2019sparse} + blend gate, (b) an optional last-value shortcut (linear layer initialized as repeat-last-value), and (c) context dropout. ``Routing'' = routing enabled; ``Uniform'' = routing weights fixed to $1/K$.

\paragraph{Multi-modal baselines.} We implement 4 fusion mechanisms within the same backbone for controlled comparison: Cross-Attention Alignment~\cite{liu2025timecma}, Gating~\cite{survey2025mmts}, Text-as-Variable~\cite{li2025tats}, and Output Fusion~\cite{liu2024timemmd}. Each is tested with real context vs.\ zeroed context (same architecture, only input differs).

\paragraph{Foundation model.} Chronos-T5-Small~\cite{ansari2024chronos} provides a zero-shot reference point (no context, no training on our data).

\subsection{Evaluation Protocol}

\paragraph{Metrics.} MSE and MAE for MoME (point forecasting); weighted quantile loss (wQL) at $\tau \in \{0.1, 0.5, 0.9\}$ for the testbed (probabilistic forecasting). All results report 3-seed mean $\pm$ std unless noted.

\paragraph{Routing contribution.} Defined as
\[
\frac{\text{metric}_{\text{w/o mod}} - \text{metric}_{\text{w/ mod}}}{\text{metric}_{\text{w/o mod}}} \times 100\%.
\]
Positive values indicate that text modulation improves performance.

\section{Results}
\label{sec:results}

We present six findings that collectively characterize when and why multi-modal fusion helps. Findings 1--5 isolate specific causal factors through controlled experiments; Finding 6 validates the $\rho$-based component of our diagnostic at scale on 27 Monash Archive datasets.

\subsection{Finding 1: Context Informativeness Determines Routing Value}
\label{sec:finding1}

Our first experiment holds the model constant and varies only the dataset. If routing value is determined by dataset properties (rather than architectural details), the same model should show dramatically different contributions across datasets.

\begin{table}[t]
\centering
\caption{MoME routing contribution across 6 datasets, each on a 10-seed protocol with 95\% CIs (footnotes). All experiments use identical model configuration (Qwen1.5-MoE-A2.7B, 14.3B params). The only variable is the dataset. The ``Naive MSE'' column reports the post-hoc MSE of a repeat-last-value predictor (no model training); it serves as a context-free reference for whether MoME genuinely beats a trivial baseline. This naive baseline is conceptually distinct from the architectural shortcut intervention used in Finding 2, which adds a residual connection during MoME training. Naive MSE values are reported only for the two datasets where the comparison is most informative (HealthUS, where MoME wins; FinMultiTime, where the naive baseline beats MoME).}
\label{tab:mome}
\small
\resizebox{\columnwidth}{!}{
\begin{tabular}{lccccc}
\toprule
Dataset & With Mod. & Without Mod. & Contrib. & Naive MSE & $\rho$ \\
\midrule
HealthUS & .399$\pm$.129 & .817$\pm$.049 & \textbf{+50.9\%} & .409 & .77 \\
Environment$^*$ & 12.63$\pm$0.39 & 21.59$\pm$0.15 & \textbf{+41.5\%} & --- & .38 \\
SocialGood & .398$\pm$.032 & .597$\pm$.086 & \textbf{+32.0\%} & --- & .62 \\
HealthAFR & .639$\pm$.082 & .898$\pm$.055 & \textbf{+28.7\%} & --- & .11 \\
\midrule
FinMultiTime & 9.1$\pm$0.2e-5 & 9.8$\pm$0.2e-5 & +6.7\% & \textbf{8.8e-5} & .999 \\
Energy & 9.6$\pm$1.2e-3 & 10.2$\pm$0.9e-3 & +4.3\%$^\sharp$ & --- & .99 \\
\bottomrule
\end{tabular}}

\smallskip
\noindent\scriptsize $^*$Environment uses MAE (different task output format); all others use MSE. Contribution = (Without Mod.$-$With Mod.)/Without Mod.$\times$100\% (per-seed mean over 10 seeds). 95\% CIs: HealthUS [40.0,61.8]; Environment [40.3,42.7]; SocialGood [25.0,39.1]; HealthAFR [22.6,34.7]; FinMultiTime [5.1,8.3]. $^\sharp$Energy's routing contribution is below the 5\% threshold and \emph{not statistically distinguishable from zero} (10-seed mean +4.3\%, 95\% CI [$-$8.0,16.5]; per-seed values span $-$27\% to +27\%). With $\rho{=}0.99$ (shortcut dominates) and non-significant MI, the diagnostic correctly outputs SKIP.
\end{table}

Table~\ref{tab:mome} confirms this prediction. The same 14.3B-parameter model, trained with the same code and hyperparameters, shows routing contributions ranging from the capacity floor (high-$\rho$ datasets) to +51\% (HealthUS). The four datasets with large positive contributions (HealthUS +51\%, Environment +41.5\%, SocialGood +32\%, HealthAFR +29\%) share two properties: moderate-to-low autocorrelation ($\rho < 0.8$) and text context that describes conditions relevant to the forecast target. The two high-autocorrelation datasets ($\rho \geq 0.99$) show only capacity-floor-level contributions: their MI is non-significant, and the routing contribution sits at or below the level the EiLM pathway yields from capacity alone even with constant text. FinMultiTime is +6.7\% (and a trivial shortcut already beats MoME outright), while Energy is \emph{not statistically distinguishable from zero}.

\paragraph{Why does FinMultiTime show negligible benefit?} On FinMultiTime, a simple repeat-last-value predictor (the ``Naive MSE'' baseline in Table~\ref{tab:mome}) achieves MSE $\sim$$8.8 \times 10^{-5}$, \emph{better} than MoME with modulation ($\sim$$9.1 \times 10^{-5}$); the routing contribution is +6.7\%. The temporal signal is so strong ($\rho = 0.999$) that even a 14.3B-parameter model cannot beat the trivial baseline. There is simply nothing left for context to improve.

\paragraph{Causal validation: context degradation.} To confirm that context quality \emph{causally} determines routing value (rather than merely correlating with it), we progressively corrupt HealthUS's text context by randomly replacing text entries with uninformative placeholders at rates of 0\%, 25\%, 50\%, and 100\%.

\paragraph{Masking protocol.} HealthUS has 491 samples with 491 distinct texts. For each sample we replace its text with a fixed uninformative placeholder with probability $p \in \{0, 0.25, 0.5, 1.0\}$. At $p{=}1.0$ all samples share identical text, so EiLM receives a constant embedding and can only learn a fixed affine transform. The same masked dataset is used at train and test.

\begin{figure}[t]
    \centering
    \includegraphics[width=0.85\columnwidth]{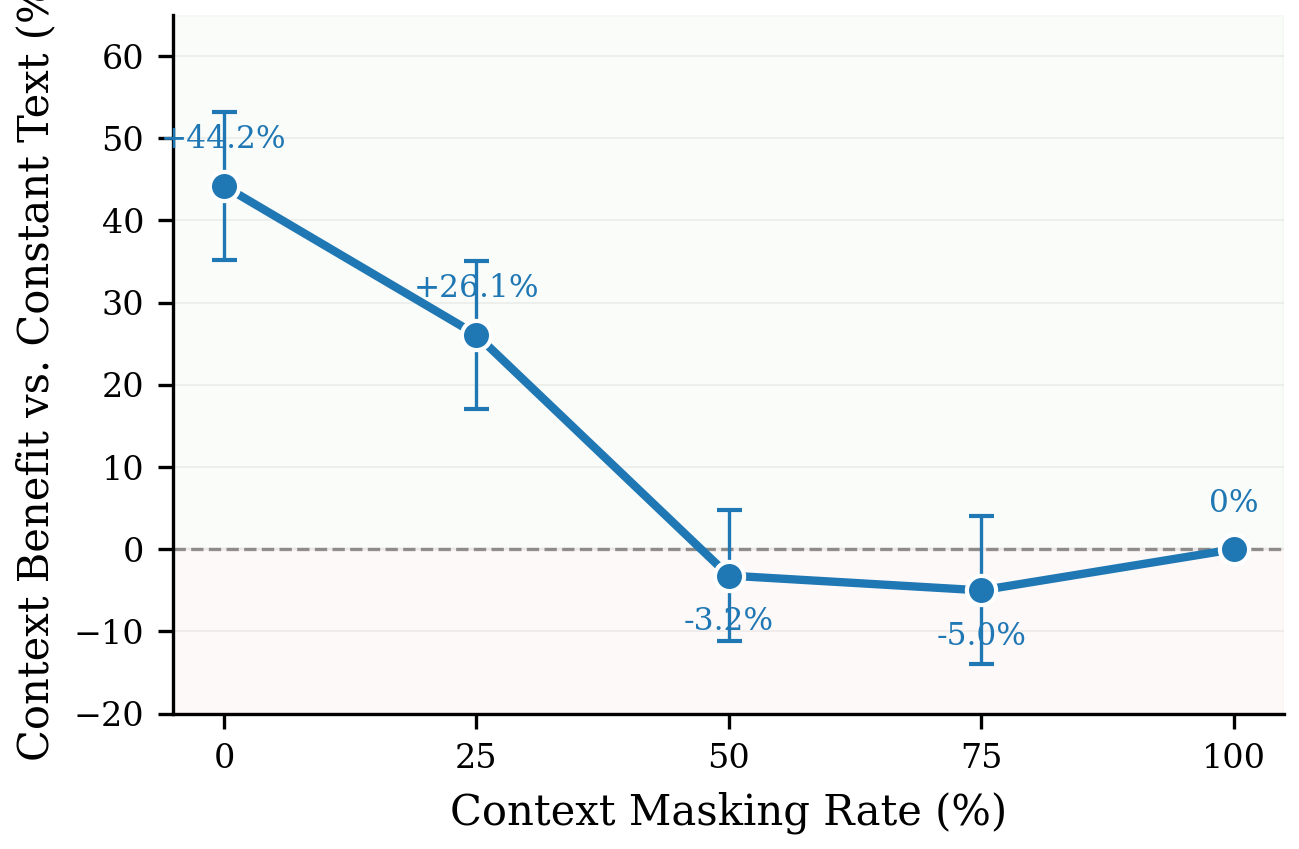}
    \caption{Context benefit relative to constant-text baseline (MoME on HealthUS, 10-seed; same architecture, same parameters, only text content differs). At 0\% mask (all real text), context provides a +44\% MSE reduction over constant text. As text is corrupted, benefit decreases (+26\% at 25\% mask) then turns negative at 50--75\% mask: partially corrupted text is \emph{worse} than fully constant text because the model attempts to use text but is misled by fake entries. At 100\% mask (baseline), all text is identical and the model learns to ignore it.}
    \label{fig:degradation}
\end{figure}

\paragraph{Result.} Figure~\ref{fig:degradation} shows context benefit using a capacity-controlled metric: we compare MSE at each mask rate (modulation ON) against the 100\% mask baseline (modulation ON, constant text), where the architecture and parameter count are identical and only the text content differs. With clean context (0\% mask), real text reduces MSE by 44\% relative to constant text. At 25\% mask, benefit drops to 26\%. At 50--75\% mask, benefit turns \emph{negative} ($-$3 to $-$5\%): partially corrupted text is worse than fully constant text because the model attempts to exploit text (some entries appear real) but is misled by the placeholder entries. At 100\% mask, all text is identical and the model learns a stable fixed transform (baseline = 0\% by definition).

\paragraph{Interpretation.} The negative benefit at 50--75\% mask reveals an important practical insight: \emph{low-quality context is worse than no context}. When text is partially corrupted, the model cannot distinguish informative from uninformative entries and is actively harmed. This supports our diagnostic's conservative design: when context informativeness is uncertain (INCONCLUSIVE), practitioners should either use fully clean context or disable the context pathway entirely, not feed noisy context hoping for partial benefit.

\subsection{Finding 2: Shortcuts Causally Suppress Routing Contribution}
\label{sec:finding2}

Our second experiment holds the dataset constant and varies the architecture. Specifically, we test whether adding a last-value shortcut to MoME suppresses routing contribution, as predicted from our theoretical framework (Fact~\ref{fact:ceiling}).

\paragraph{Experiment design.} We modify MoME's training to include a residual shortcut: the model's output becomes $\hat{y} = f_{\text{MoME}}(X, C) + X_t$ (adding the last observed value). This gives the model a ``free'' baseline prediction, meaning the learned component $f_{\text{MoME}}$ only needs to predict the \emph{residual} beyond the shortcut. We train 4 conditions on each dataset: \{with/without modulation\} $\times$ \{with/without shortcut\}, each with 3 seeds.

\begin{figure}[t]
    \centering
    \includegraphics[width=0.85\columnwidth]{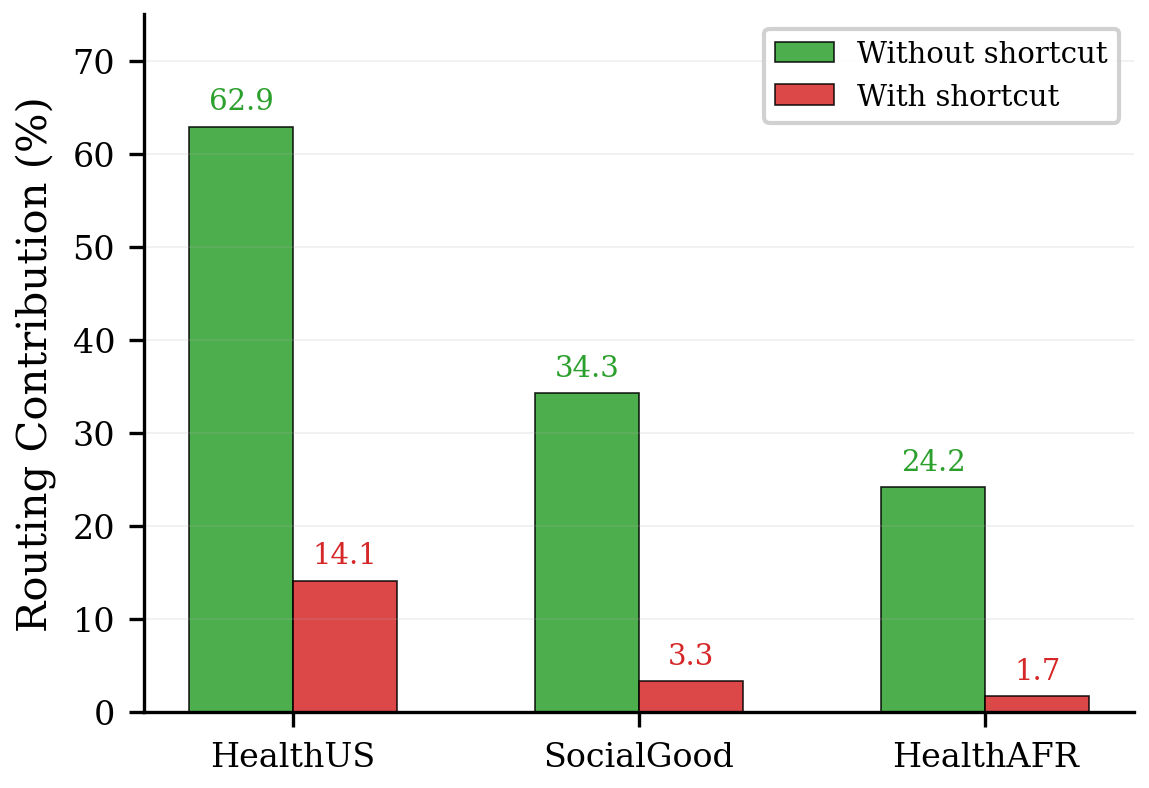}
    \caption{Shortcut suppression across 3 datasets (MoME, 3 seeds each). Adding a last-value shortcut during training consistently suppresses routing contribution by 77--93\%. Suppression is near-total on SocialGood and HealthAFR (residual contribution $<$4\%); on HealthUS, the moderate autocorrelation ($\rho = 0.77$) leaves room for a residual 14\% routing contribution.}
    \label{fig:shortcut}
\end{figure}

\paragraph{Result.} Figure~\ref{fig:shortcut} shows the result across 3 datasets: routing contribution is consistently suppressed by 77--93\% when a shortcut is added. On HealthUS, contribution drops from \textbf{+62.9\%} to \textbf{+14.1\%}; on SocialGood, from \textbf{+34.3\%} to \textbf{+3.3\%}; on HealthAFR, from \textbf{+24.2\%} to \textbf{+1.7\%}. The shortcut absorbs predictable variance that would otherwise be captured by text modulation, leaving less room for routing to contribute. The suppression is not complete on HealthUS ($\rho = 0.77$) because the shortcut is imperfect at moderate autocorrelation, but it is near-total on SocialGood and HealthAFR (residual contribution $<$4\%).

\paragraph{Why does the shortcut increase absolute MSE? (and a caveat).} On HealthUS the raw shortcut raises absolute MSE (0.33/0.90 $\to$ 1.09/1.27): the data is z-normalized, so on trending windows the optimal coefficient is $\beta_h>1$ and a raw $+X_t$ term is biased. \emph{We flag this as a limitation}: the suppression magnitudes in Fig.~\ref{fig:shortcut} should be read as an \emph{upper bound}. Re-running with the unbiased optimal shortcut $\beta_h X_t$ keeps MSE in range and still suppresses routing partially (+62.9\%\,$\to$\,+35.6\%, consistent with the residual room at $\rho{=}0.77$), so the causal claim holds under both; the relevant comparison is always \emph{within} each condition (modulation on vs.\ off).

\paragraph{Testbed confirmation.} Our controlled testbed confirms the same pattern across 5 datasets (Table~\ref{tab:shortcut_ablation}). With shortcut present, routing benefit is statistically indistinguishable from zero on \emph{all} datasets (including a 10-seed evaluation on Transport: $-2.2\% \pm 3.9\%$, $p = 0.13$). Removing the shortcut unmasks the true pattern: positive benefit on MI-significant datasets, near-zero on MI-non-significant datasets.

\begin{table}[t]
\centering
\caption{Testbed routing benefit with vs.\ without shortcut (3-seed wQL $\downarrow$). With shortcut present, routing contributes $\leq$0\% on all datasets regardless of MI significance. Removing the shortcut reveals the underlying pattern.}
\label{tab:shortcut_ablation}
\small
\begin{tabular}{lccc}
\toprule
Dataset & With Shortcut & Without Shortcut & MI sig.? \\
\midrule
Health & $-$0.6\% ($p$=.34) & \textbf{+4.8\%} & Yes ($p<$.001) \\
Transport & $-$2.2\% ($p$=.13) & \textbf{+2.2\%} & Yes ($p$=.015) \\
Energy & $-$1.6\% & +1.8\% & No ($p$=.580) \\
Climate & $-$1.4\% & $-$0.5\% & No ($p$=.705) \\
Web & $-$3.5\% & $-$0.4\% & No ($p$=1.0) \\
\bottomrule
\end{tabular}
\end{table}

\paragraph{Negative control: FinMultiTime.} A natural concern is whether our shortcut intervention always works, or only on the three datasets we tested. FinMultiTime ($\rho = 0.999$) provides a built-in negative control. The naive repeat-last-value baseline already achieves MSE $8.8 \times 10^{-5}$ versus MoME's $9.1 \times 10^{-5}$ (Table~\ref{tab:mome})---i.e., the simplest possible context-free predictor already outperforms MoME. Adding an architectural shortcut on top would not change routing contribution further: it is already $\approx 0$ because the temporal signal completely dominates and there is no residual variance for routing to exploit. This serves as a saturation case: when the naive baseline already wins, the suppression mechanism we identified has no room to operate.

\paragraph{Practical implication.} This finding has direct architectural consequences. Many modern forecasting models include residual connections or normalization layers (e.g., RevIN~\cite{kim2022reversible}) that function as implicit shortcuts. If practitioners want to measure whether context \emph{genuinely} helps, they must control for this confound, either by removing the shortcut or by comparing against a shortcut-only baseline.

\subsection{Finding 3: Context Contribution Across Fusion Mechanisms}
\label{sec:finding3}

Our third experiment asks: does the ``MI-significant $\rightarrow$ fusion helps'' pattern hold across different fusion mechanisms, or is it specific to MoME's expert modulation?

\paragraph{Experiment design.} We implement 4 fusion mechanisms within the same PatchTransformer backbone (no shortcut): Cross-Attention Alignment, Gating, Text-as-Variable, and Output Fusion. For each mechanism and dataset, we compare performance with real context vs.\ zeroed context (identical architecture, only the input differs, 3 seeds). Note that MoME is not included in this comparison because it uses a fundamentally different backbone (14.3B-parameter LLM-based MoE) that cannot be reduced to our testbed framework; MoME's results are reported separately in Table~\ref{tab:mome}.

\begin{figure}[t]
    \centering
    \includegraphics[width=0.9\columnwidth]{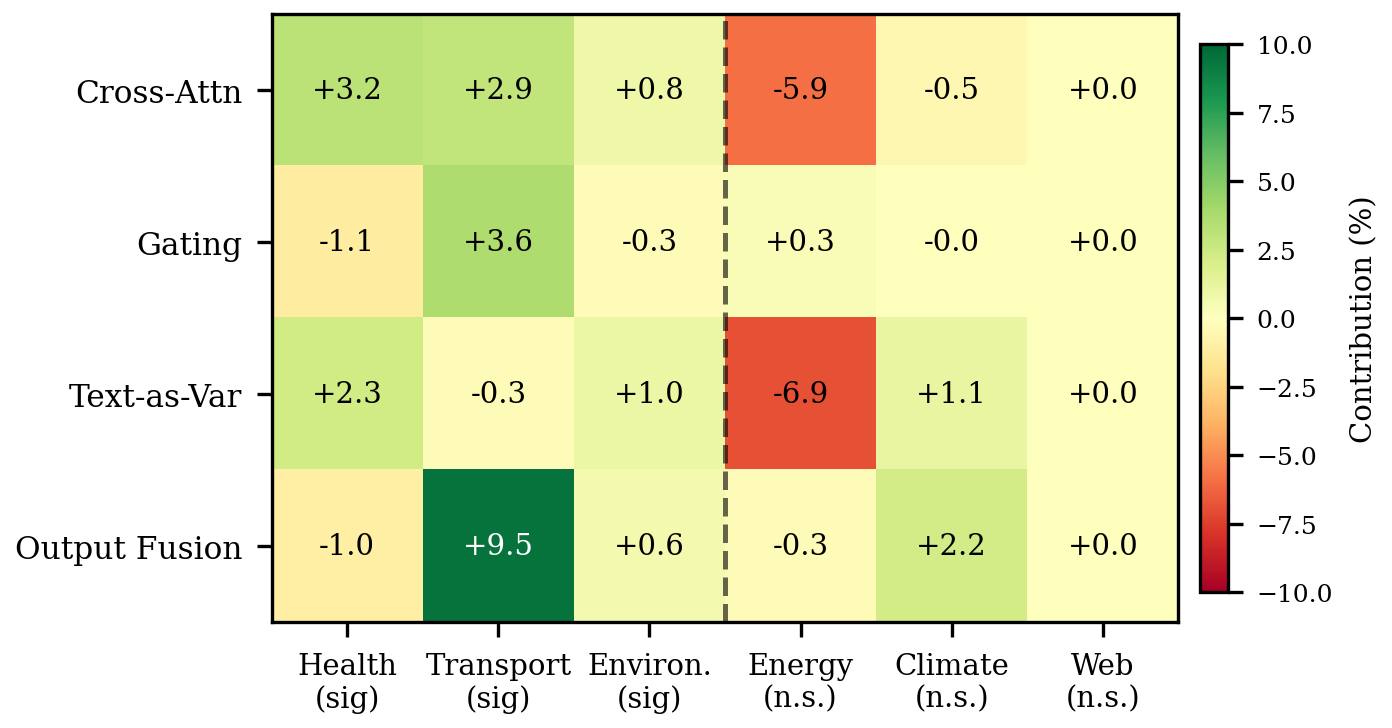}
    \caption{Context contribution (\%) across 4 fusion mechanisms and 5 datasets (testbed, no shortcut). Cells marked $\dagger$ (Cross-Attention and Text-as-Variable on Energy) use 10-seed evaluation; all others use 3-seed. Full per-cell std and significance tests are in Table~\ref{tab:app_multimodel}. On MI-significant datasets (left of dashed line), some mechanisms show small positive contributions; on MI-non-significant datasets (right), no mechanism shows a positive contribution. MoME is not shown here as it uses a different backbone (see Table~\ref{tab:mome}).}
    \label{fig:multimodel}
\end{figure}

\paragraph{Result.} Figure~\ref{fig:multimodel} shows that in our testbed, most context contributions are small. On MI-significant datasets, Cross-Attention (+3.2\% Health, +2.9\% Transport), Gating (+3.6\% Transport), and Output Fusion (+9.5\% Transport) show positive contributions. On MI-non-significant datasets, Gating and Output Fusion show $\approx$0\%, while Cross-Attention and Text-as-Variable show small \emph{negative} effects on Energy ($-$5.9\% and $-$6.9\%, 10-seed; neither significant at $\alpha=0.05$): attending to uninformative context introduces noise, and Cross-Attention is particularly vulnerable because it allocates capacity to context regardless of informativeness. Crucially, no MI-non-significant dataset shows a \emph{positive} context contribution---the MI test correctly identifies the ``no benefit'' regime.

\paragraph{Interpretation.} The testbed effects are small in magnitude compared to MoME's 29--51\%. As Finding 5 will show, context-target relationships are fundamentally nonlinear; our 2-layer testbed lacks the capacity to exploit them. The testbed's value is in confirming two patterns: (i) MI-non-significant datasets never show positive context benefit, and (ii) some fusion mechanisms (Cross-Attention) are vulnerable to degradation from uninformative context, a practical consideration for architecture selection. In summary: \textbf{the MI test's negative predictions (no benefit from context) hold across all tested fusion mechanisms}.

\subsection{Finding 4: A Calibrated Pre-Training Diagnostic}
\label{sec:finding4}

Based on Findings 1--3, we propose a calibrated diagnostic that practitioners can run \emph{before training any model} to decide whether multi-modal fusion is worth pursuing.

\paragraph{The diagnostic.} Given time series $X$, context $C$, and forecast horizon $h$:
\begin{enumerate}
    \item \textbf{Shortcut check} ($<$1 second): Compute $\rho_h$. If $\rho_h > 0.95$, output \textsc{SKIP\_FUSION}, since a last-value shortcut will dominate.
    \item \textbf{MI permutation test} ($\sim$1 minute): Estimate $\delta$ and compute $p$-value via 200 permutations.
    \begin{itemize}
        \item $p < 0.05$ $\rightarrow$ \textsc{TRY\_FUSION}: context is informative.
        \item $p \geq 0.05$ and power $\geq 0.8$ $\rightarrow$ \textsc{SKIP\_FUSION}: context is uninformative with high confidence.
        \item $p \geq 0.05$ and power $< 0.8$ $\rightarrow$ \textsc{INCONCLUSIVE}: insufficient evidence; recommend training a cheap model and validating on held-out data.
    \end{itemize}
\end{enumerate}

Power is classified as \textsc{HIGH} when $n \gtrsim 500$ with unique context per sample (unique-context fraction $> 0.5$), and \textsc{LOW} otherwise (substantially smaller $n$, or context embeddings reused cyclically). HealthUS ($n=491$) is on the boundary; we treat it as HIGH because its unique-context fraction is $>$0.95 and the MI test rejected the null at $p<0.001$. Figure~\ref{fig:flowchart} summarizes the decision logic.

\begin{figure}[t]
    \centering
    \resizebox{0.65\columnwidth}{!}{%
    \begin{tikzpicture}[node distance=0.2cm and 0.4cm]
        \node[stepbox] (s1) {Step 1: Compute $\rho_h$\quad($<$1 sec)};
        \node[decision, below=of s1] (d1) {$\rho_h > 0.95$?};
        \node[skipterm, right=2.0cm of d1] (skip1) {SKIP\_FUSION (shortcut dominates)};

        \node[stepbox, below=1.0cm of d1] (s2) {Step 2: MI permutation test\quad($\sim$1 min)};
        \node[decision, below=of s2] (d2) {$p < 0.05$?};
        \node[tryterm, right=2.0cm of d2] (try) {TRY\_FUSION (context informative)};

        \node[decision, below=1.0cm of d2] (d3) {Power $\geq 0.8$?};
        \node[skipterm, right=2.0cm of d3] (skip2) {SKIP\_FUSION (uninformative, high conf.)};

        \node[inconterm, below=1.0cm of d3] (incon) {INCONCLUSIVE (train cheap, validate held-out)};

        \draw[flowarrow] (s1) -- (d1);
        \draw[flowarrow] (d1) -- node[above, font=\scriptsize] {Yes} (skip1);
        \draw[flowarrow] (d1) -- node[right, font=\scriptsize, pos=0.45] {No} (s2);
        \draw[flowarrow] (s2) -- (d2);
        \draw[flowarrow] (d2) -- node[above, font=\scriptsize] {Yes} (try);
        \draw[flowarrow] (d2) -- node[right, font=\scriptsize, pos=0.45] {No} (d3);
        \draw[flowarrow] (d3) -- node[above, font=\scriptsize] {Yes} (skip2);
        \draw[flowarrow] (d3) -- node[right, font=\scriptsize, pos=0.45] {No} (incon);
    \end{tikzpicture}}
    \caption{Practitioner decision framework. The diagnostic outputs one of three recommendations based on autocorrelation $\rho_h$, MI significance, and statistical power. Total computation time: $\sim$1 minute, zero GPU required.}
    \label{fig:flowchart}
\end{figure}

\paragraph{Validation.} Table~\ref{tab:diagnostic} validates the diagnostic on all 8 datasets. Key properties:
\begin{itemize}[leftmargin=*]
    \item \textbf{No false positives observed} in well-powered settings: on the datasets we test, every SKIP\_FUSION decision corresponds to no context-attributable benefit (we report this as an empirical observation, not a guarantee).
    \item \textbf{No misleading SKIPs}: SocialGood and Environment (which have positive MoME benefits but non-significant MI under the original embeddings) correctly receive INCONCLUSIVE rather than SKIP, because their power is LOW.
    \item \textbf{Correct TRY recommendations}: HealthUS (significant MI) correctly receives TRY, corresponding to +51\% actual benefit.
\end{itemize}

\begin{table}[t]
\centering
\caption{Diagnostic validation across all datasets. The diagnostic produces zero errors in the TRY/SKIP categories. INCONCLUSIVE is output when power is insufficient, correctly avoiding false negatives.}
\label{tab:diagnostic}
\small
\begin{tabular}{lccccl}
\toprule
Dataset & $n$ & $p$ & Power & Output & Actual \\
\midrule
HealthUS & 491 & $<$.001 & High & TRY & +51\% \checkmark \\
Transport & 5000 & .015 & High & TRY & +2\% \checkmark \\
\midrule
Energy & 1059 & .580 & High & SKIP & $\approx$0$^\sharp$ \checkmark \\
Climate & 1200 & .705 & High & SKIP & $-$0.5\% \checkmark \\
Web & 1068 & 1.0 & High & SKIP & $-$0.4\% \checkmark \\
FinMultiTime & 2000 & n/s & High & SKIP & +6.7\%$^\sharp$ \checkmark \\
\midrule
SocialGood & 363 & .145 & Low & INCON. & +32\% \checkmark \\
Environment & 1597$^*$ & .615 & Low & INCON. & +41.5\% \checkmark \\
\bottomrule
\end{tabular}

\smallskip
\noindent\scriptsize $^*$156 unique embeddings reused cyclically. INCON.\ = INCONCLUSIVE (no claim made; recommend trying). ``Actual'' = MoME routing contribution (testbed for Transport). $^\sharp$High-$\rho$ SKIP datasets: the MoME routing contribution sits at the modulation pathway's capacity floor and carries no MI signal (Energy not statistically distinguishable from zero; see Table~\ref{tab:mome} footnote).
\end{table}

\subsection{Finding 5: Context-Target Relationships are Nonlinear}
\label{sec:nonlinear}

A natural question is whether a simpler diagnostic, such as cross-validated linear regression, could replace the MI permutation test. We test this by fitting a Ridge regression model to predict the target from context (5-fold CV) and measuring whether adding context improves prediction over history alone.

\paragraph{Result.} Cross-validated linear prediction gain from context is \emph{zero or negative} on \emph{all} datasets, including HealthUS, where MoME achieves +51\%. This reveals that context-target relationships are fundamentally nonlinear: a linear model cannot detect or exploit the signal that a 14.3B-parameter model exploits for a 51\% improvement.

\paragraph{Implications.}
\begin{enumerate}[leftmargin=*]
    \item The MI permutation test (k-NN based, nonparametric) is the correct diagnostic precisely because it captures nonlinear dependencies that linear methods miss.
    \item The magnitude gap between our testbed (+2--5\%) and MoME (+29--51\%) reflects \emph{model capacity}, not experimental noise. A 2-layer Transformer cannot exploit the same nonlinear signal that a 14.3B-parameter LLM-based model can.
    \item Practitioners should not use linear probes or simple correlation tests to assess context value; these will systematically underestimate the opportunity.
\end{enumerate}

\subsection{Finding 6: Large-Scale Validation on 27 Monash Datasets}
\label{sec:finding6}

To validate the $\rho$-based component of our diagnostic beyond our 8 datasets, we compute autocorrelation at the prediction horizon for 27 datasets from the Monash Time Series Forecasting Archive~\cite{godahewa2021monash}, spanning hourly to yearly frequencies. For each dataset, we measure the relative performance of the repeat-last-value shortcut against a naive mean predictor.

\begin{figure}[t]
    \centering
    \includegraphics[width=0.85\columnwidth]{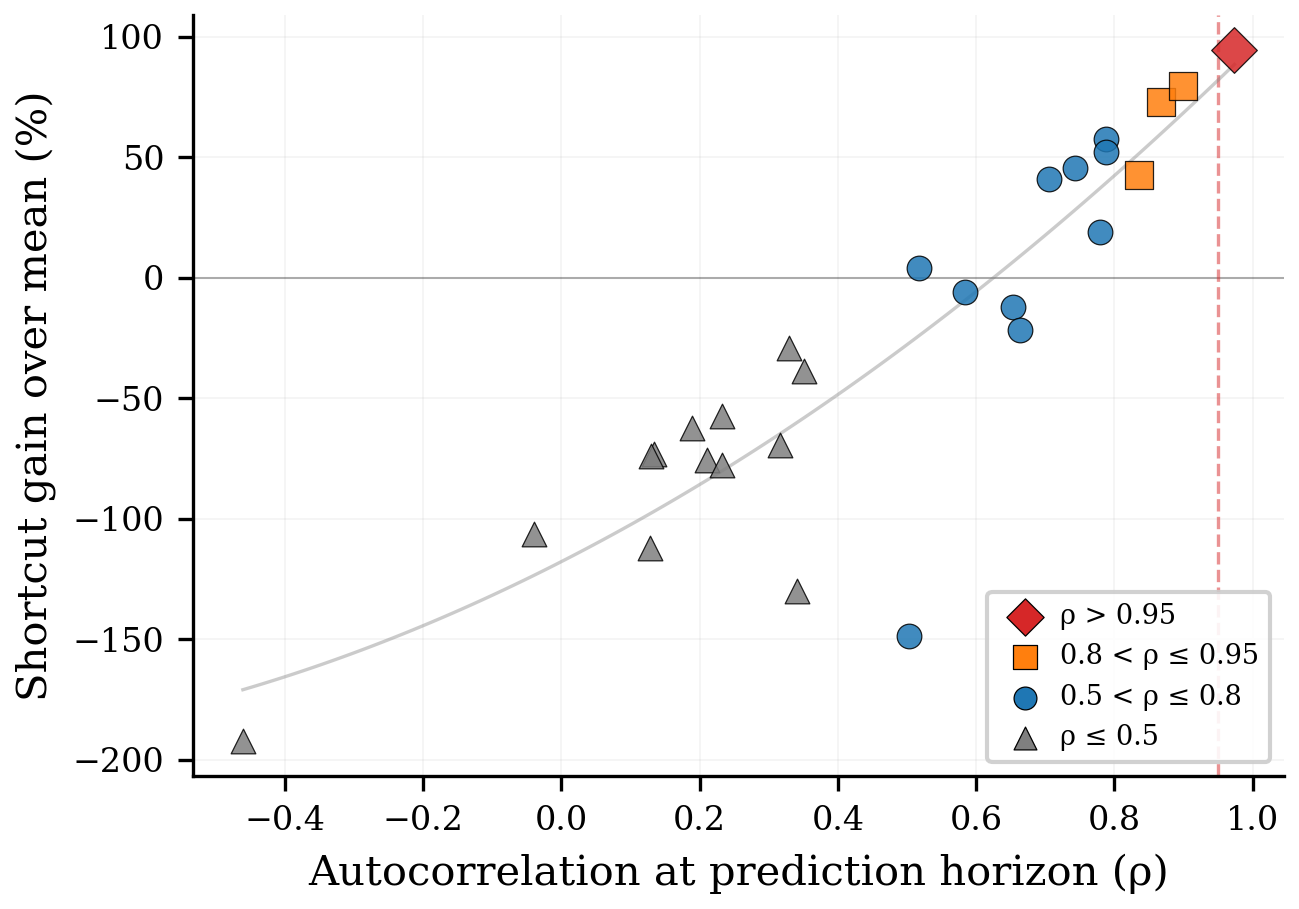}
    \caption{Large-scale validation on 27 Monash Archive datasets. Autocorrelation at the prediction horizon strongly predicts shortcut dominance (Spearman $r = 0.888$, $p < 0.0001$). Datasets with $\rho > 0.95$ (red diamonds) show the strongest shortcut performance, validating the SKIP\_FUSION threshold.}
    \label{fig:monash}
\end{figure}

\paragraph{Result.} Figure~\ref{fig:monash} shows a strong monotonic relationship: autocorrelation at the prediction horizon strongly predicts shortcut dominance (Spearman $r = 0.888$, $p < 0.0001$, $n = 27$). All datasets with $\rho > 0.8$ show positive shortcut gain (42--95\%), while all datasets with $\rho < 0.3$ show negative shortcut gain (the shortcut is worse than the mean). This validates our diagnostic's first step (the $\rho > 0.95$ threshold for SKIP\_FUSION) at scale across diverse domains and frequencies.

\paragraph{Scope of validation.} Monash datasets lack auxiliary context, so this validates only the $\rho$-based component; the MI component remains validated on our 8 text-context datasets.

\section{Discussion}

\paragraph{The shortcut tradeoff for practitioners.} Our results reveal a practical tension. On high-$\rho$ data (e.g., FinMultiTime, $\rho = 0.999$), the naive repeat-last-value baseline \emph{outperforms} even a 14.3B-parameter model, so practitioners should use simple baselines (DLinear, repeat-last-value) and skip multi-modal fusion entirely. On moderate-$\rho$ data with informative context (e.g., HealthUS, $\rho = 0.77$), MoME \emph{with} text modulation (MSE 0.399) edges out the naive baseline (0.409) and is far better than MoME \emph{without} text (0.817); the value of fusion here is precisely the text pathway, not the architecture. The diagnostic identifies which regime applies before any model is trained.

\paragraph{INCONCLUSIVE: frequency and impact.} 2 of 8 datasets receive INCONCLUSIVE, both due to low MI-test power (small $n$ or reused embeddings), not methodology failure. The asymmetric cost structure justifies this: a false negative costs one verification run, whereas a false positive (SKIP where fusion helps) costs accuracy with no recovery path.

\paragraph{Metric comparability.} Environment uses MAE while other MoME datasets use MSE, because MoME's Environment task outputs a different prediction format. The routing contribution percentages (computed as relative improvement within each dataset) are comparable across metrics: both measure ``how much does modulation reduce error relative to no-modulation.'' However, absolute MSE/MAE values should not be compared across datasets.

\paragraph{Comparison with Zhang et al.\ (2025).} Zhang et al.~\cite{zhang2025when} find multi-modal benefits ``condition-dependent'' but do not explain \emph{why}; our framework does. On FinMultiTime they would see MoME's small +6.7\% and conclude ``fusion barely helps,'' whereas our $\rho{=}0.999$ analysis shows a trivial baseline already beats MoME---no context can improve on it. On Web they would see multi-modal methods beat TS-only baselines by 50\%+ and conclude ``fusion helps,'' whereas our zero-context ablation reveals an architectural confound, not context exploitation.

\paragraph{RBU as interpretive framework.} RBU (Proposition~\ref{prop:rbu}) explains \emph{why} the pattern exists: $(1-\rho_h^2)$ is the room left by the shortcut, and $(1-2^{-2\delta})$ is the fraction context can fill. It is exact only under joint Gaussianity (Appendix~\ref{app:rbu_proof}); we use the plug-in $\widehat{\text{RBU}}$ as an interpretive sanity check, not a bound, since its Kraskov $\hat\delta$ under-detects nonlinear dependence and so biases $\widehat{\text{RBU}}$ low (consistent with HealthUS, where MoME realizes a large 51\% benefit). The distribution-free shortcut ceiling (Fact~\ref{fact:ceiling}) remains the only strict bound.

\paragraph{Foundation model reference.} Chronos~\cite{ansari2024chronos} (zero-shot, no context) outperforms the shortcut on Health (+14\%), Transport (+12\%), and Web (+23\%), but underperforms on Energy ($-$8\%). This is consistent with our framework: on high-$\rho$ data (Energy, $\rho = 0.99$), even foundation models cannot beat simple shortcuts.

\paragraph{Limitations.}
\begin{itemize}[leftmargin=*]
    \item \textbf{Positive magnitudes concentrated in MoME.} The large routing contributions (+29--51\%) come from a single model family. The testbed corroborates the \emph{direction} (sign-consistent, +2--5\%) but not the magnitude; the gap is a capacity/overfitting effect (a 2-layer from-scratch model cannot exploit the high-dimensional signal a pretrained 14.3B backbone can). Practitioners can rely on the \emph{sign} of our diagnostic, not the specific percentages.
    \item \textbf{Single-lag MI conditioning.} Our residual removes only the first-order dependence on $X_t$ (Section~\ref{sec:theory}); on strongly seasonal data the TRY arm may overstate context value. The SKIP arm and the distribution-free null are unaffected.
    \item \textbf{Modality and dataset scope.} Context comes from Time-MMD (text) and FinMultiTime (financial news); we do not test tabular, image, or audio context. The diagnostic is modality-agnostic by construction, but its empirical validation here is confined to these sources.
    \item \textbf{Probabilistic forecasting.} MoME outputs point forecasts only; evaluation with probabilistic metrics (wQL, CRPS) would strengthen the assessment.
    \item \textbf{Coverage gaps.} The MoME shortcut experiment could not run on Energy and Environment (GPU memory), covering 3 of 5 MI-significant datasets; the Monash validation tests only the $\rho$-based component (no text context); and TimesFM~\cite{das2024timesfm}/Moirai~\cite{woo2024moirai} could not be installed, leaving Chronos~\cite{ansari2024chronos} as the foundation-model reference.
    \item \textbf{Claim scope.} ``No false positives in well-powered settings'' is an empirical observation on the datasets we test, not a proven general property.
\end{itemize}

\section{Related Work}
\label{sec:related}

\paragraph{Time series forecasting architectures.} Modern forecasting has evolved from classical approaches~\cite{makridakis2018statistical} through deep Transformer-based methods that handle long horizons, periodicity, and distribution shift~\cite{zhou2021informer,wu2021autoformer,nie2023patchtst,liu2024itransformer,kim2022reversible}. A counter-trend questions whether such complexity is necessary: DLinear~\cite{zeng2023dlinear} showed that single-layer linear models match Transformers on many benchmarks. Our work continues this skeptical line: we identify a structural reason (last-value shortcuts capturing most of the predictable variance) that explains why simple baselines remain competitive on autocorrelated data, and shows the same factor governs whether multi-modal fusion provides genuine benefit.

\paragraph{Time series foundation models.} Pretraining on large time series corpora has produced general-purpose forecasters: Chronos~\cite{ansari2024chronos}, TimesFM~\cite{das2024timesfm}, Moirai~\cite{woo2024moirai}, and MOMENT~\cite{goswami2024moment} all demonstrate strong zero-shot transfer. A parallel line repurposes pretrained language models for time series~\cite{zhou2023onefitsall,jin2024timellm}. None of these foundation models incorporate auxiliary context, which leaves open the question we address: under what conditions can context provide value beyond what these models already extract from history?

\paragraph{Multi-modal time series forecasting.} The field has grown rapidly, spanning output-level fusion~\cite{liu2024timemmd,lee2024moat}, cross-attention~\cite{liu2025timecma}, text-as-variable~\cite{li2025tats}, gating~\cite{survey2025mmts,perez2018film}, and mixture-of-experts~\cite{mome2025,shazeer2017outrageously,fedus2022switch,segmoe2025}. A recent survey~\cite{survey2025mmts} catalogs these mechanisms, and broader multi-modal learning has been surveyed in~\cite{baltrusaitis2019multimodal}. Our work does not propose a new fusion method; instead, we characterize when existing methods provide genuine value versus architectural confounds.

\paragraph{Strong simple baselines as a benchmark for ``does it help?''} DLinear~\cite{zeng2023dlinear} showed linear models match Transformers; RevIN~\cite{kim2022reversible} showed normalization outperforms complex architectures. Our results extend this line by demonstrating that last-value shortcuts similarly dominate multi-modal fusion on autocorrelated data, and that the dominance is \emph{causal} in the interventionist sense~\cite{pearl2009causality}: adding a shortcut to MoME suppresses routing contribution, and corrupting context degrades it monotonically. The pattern is consistent with the broader phenomenon of \emph{shortcut learning}~\cite{geirhos2020shortcut}, where models exploit easy predictive signals at the cost of harder, more transferable ones.

\paragraph{Relation to recent strong uni-modal forecasters.} A parallel line pushes uni-modal accuracy through specialized architectures: multi-period decomposition (MLF~\cite{zhang2025mlf}), variable/time-aware hyper-states (TimePro~\cite{ma2025timepro}), semantics-enhanced MLP-mixing (SEMixer~\cite{zhang2026semixer}), and pattern-specific experts under patch-level distribution shift~\cite{sun2024patchexpert}. These are \emph{orthogonal} to us: they improve how a model extracts signal from history alone, whereas we ask whether \emph{auxiliary context} adds value beyond history, and provide a diagnostic rather than a forecaster. A stronger uni-modal backbone in fact reinforces our thesis---the better history is modeled, the less room remains for context (Fact~\ref{fact:ceiling}), making the shortcut-dominance condition \emph{more} binding. We therefore do not benchmark against them, as our claims concern the conditions for context value, not SOTA accuracy.

\paragraph{When does multimodality help?} Concurrent to our work, Zhang et al.~\cite{zhang2025when} empirically evaluate when multi-modal fusion helps, finding benefits are condition-dependent. Our analysis goes beyond theirs in three ways. First, we provide a mechanistic explanation centered on shortcut dominance and MI significance, with a Gaussian-case quantification (RBU). Second, we offer causal demonstrations via shortcut addition and context degradation rather than purely observational evidence. Third, the diagnostic we propose includes explicit power characterization and a ternary output that handles low-power cases honestly, instead of forcing a binary recommendation.

\paragraph{Mutual information estimation.} We use the Kraskov k-NN estimator~\cite{kraskov2004estimating} with permutation testing. Power limitations of nonparametric MI testing are characterized by Berrett \& Samworth~\cite{berrett2019nonparametric}. Recent neural MI estimators such as MINE~\cite{belghazi2018mine} and variational bounds~\cite{poole2019variational} offer alternative routes; we choose Kraskov for its established theoretical properties and ease of integration with the permutation procedure. Sparse attention mechanisms in our routing testbed build on entmax~\cite{peters2019sparse,correia2019adaptively}.

\section{Conclusion}

We identify two conditions that must both hold for auxiliary context to help in time series forecasting: (1) absence of a competing shortcut, and (2) statistically informative context. We validate these conditions causally: adding a shortcut to MoME suppresses routing contribution by 77--93\% across 3 datasets (HealthUS: 62.9\% $\to$ 14.1\%, SocialGood: 34.3\% $\to$ 3.3\%, HealthAFR: 24.2\% $\to$ 1.7\%); corrupting context drives the context-specific benefit from +44\% (clean text) to negative (partially corrupted text). The $\rho$-based diagnostic is validated at scale on 27 Monash Archive datasets (Spearman $r = 0.888$, $p < 0.0001$). We are explicit that the large positive magnitudes come from a single model family (MoME); the testbed corroborates direction, not size.

We provide a calibrated diagnostic (TRY/SKIP/INCONCLUSIVE) that, on the datasets we test, yields no false positives in well-powered settings, enabling practitioners to decide whether to invest in fusion \emph{before training a single model}. The practical takeaway: compute $\rho_h$ and run a 1-minute MI permutation test. If the shortcut dominates or context is uninformative, skip complex fusion, since no amount of architectural sophistication will help.

\bibliographystyle{ACM-Reference-Format}
\bibliography{references_analysis}


\begin{thebibliography}{40}


\ifx \showCODEN    \undefined \def \showCODEN     #1{\unskip}     \fi
\ifx \showISBNx    \undefined \def \showISBNx     #1{\unskip}     \fi
\ifx \showISBNxiii \undefined \def \showISBNxiii  #1{\unskip}     \fi
\ifx \showISSN     \undefined \def \showISSN      #1{\unskip}     \fi
\ifx \showLCCN     \undefined \def \showLCCN      #1{\unskip}     \fi
\ifx \shownote     \undefined \def \shownote      #1{#1}          \fi
\ifx \showarticletitle \undefined \def \showarticletitle #1{#1}   \fi
\ifx \showURL      \undefined \def \showURL       {\relax}        \fi
\providecommand\bibfield[2]{#2}
\providecommand\bibinfo[2]{#2}
\providecommand\natexlab[1]{#1}
\providecommand\showeprint[2][]{arXiv:#2}

\bibitem[Ansari et~al\mbox{.}(2024)]%
        {ansari2024chronos}
\bibfield{author}{\bibinfo{person}{Abdul~Fatir Ansari},
  \bibinfo{person}{Lorenzo Stella}, \bibinfo{person}{Caner Turkmen},
  \bibinfo{person}{Xiyuan Zhang}, \bibinfo{person}{Pedro Mercado},
  \bibinfo{person}{Huibin Shen}, \bibinfo{person}{Michael Bohlke-Schneider},
  \bibinfo{person}{Yuyang Wang}, {et~al\mbox{.}}}
  \bibinfo{year}{2024}\natexlab{}.
\newblock \showarticletitle{Chronos: Learning the Language of Time Series}.
\newblock \bibinfo{journal}{\emph{arXiv preprint arXiv:2403.07815}}
  (\bibinfo{year}{2024}).
\newblock


\bibitem[Baltrusaitis et~al\mbox{.}(2019)]%
        {baltrusaitis2019multimodal}
\bibfield{author}{\bibinfo{person}{Tadas Baltrusaitis},
  \bibinfo{person}{Chaitanya Ahuja}, {and} \bibinfo{person}{Louis-Philippe
  Morency}.} \bibinfo{year}{2019}\natexlab{}.
\newblock \showarticletitle{Multimodal Machine Learning: A Survey and
  Taxonomy}.
\newblock \bibinfo{journal}{\emph{IEEE Transactions on Pattern Analysis and
  Machine Intelligence}}  \bibinfo{volume}{41} (\bibinfo{year}{2019}),
  \bibinfo{pages}{423--443}.
\newblock


\bibitem[Belghazi et~al\mbox{.}(2018)]%
        {belghazi2018mine}
\bibfield{author}{\bibinfo{person}{Mohamed~Ishmael Belghazi},
  \bibinfo{person}{Aristide Baratin}, \bibinfo{person}{Sai Rajeswar},
  \bibinfo{person}{Sherjil Ozair}, \bibinfo{person}{Yoshua Bengio},
  \bibinfo{person}{Aaron Courville}, {and} \bibinfo{person}{R~Devon Hjelm}.}
  \bibinfo{year}{2018}\natexlab{}.
\newblock \showarticletitle{Mutual Information Neural Estimation}. In
  \bibinfo{booktitle}{\emph{ICML}}.
\newblock


\bibitem[Berrett and Samworth(2019)]%
        {berrett2019nonparametric}
\bibfield{author}{\bibinfo{person}{Thomas~B Berrett} {and}
  \bibinfo{person}{Richard~J Samworth}.} \bibinfo{year}{2019}\natexlab{}.
\newblock \showarticletitle{Nonparametric independence testing via mutual
  information}.
\newblock \bibinfo{journal}{\emph{Biometrika}} \bibinfo{volume}{106},
  \bibinfo{number}{3} (\bibinfo{year}{2019}), \bibinfo{pages}{547--566}.
\newblock


\bibitem[Correia et~al\mbox{.}(2019)]%
        {correia2019adaptively}
\bibfield{author}{\bibinfo{person}{Gon{\c{c}}alo~M Correia},
  \bibinfo{person}{Vlad Niculae}, {and} \bibinfo{person}{Andr{\'e}~FT
  Martins}.} \bibinfo{year}{2019}\natexlab{}.
\newblock \showarticletitle{Adaptively Sparse Transformers}. In
  \bibinfo{booktitle}{\emph{EMNLP}}.
\newblock


\bibitem[Cover and Thomas(2006)]%
        {cover2006elements}
\bibfield{author}{\bibinfo{person}{Thomas~M Cover} {and} \bibinfo{person}{Joy~A
  Thomas}.} \bibinfo{year}{2006}\natexlab{}.
\newblock \bibinfo{booktitle}{\emph{Elements of Information Theory}
  (\bibinfo{edition}{2nd} ed.)}.
\newblock \bibinfo{publisher}{Wiley-Interscience}.
\newblock


\bibitem[Das et~al\mbox{.}(2024)]%
        {das2024timesfm}
\bibfield{author}{\bibinfo{person}{Abhimanyu Das} {et~al\mbox{.}}}
  \bibinfo{year}{2024}\natexlab{}.
\newblock \showarticletitle{A Decoder-Only Foundation Model for Time-Series
  Forecasting}. In \bibinfo{booktitle}{\emph{ICML}}.
\newblock


\bibitem[Fedus et~al\mbox{.}(2022)]%
        {fedus2022switch}
\bibfield{author}{\bibinfo{person}{William Fedus}, \bibinfo{person}{Barret
  Zoph}, {and} \bibinfo{person}{Noam Shazeer}.}
  \bibinfo{year}{2022}\natexlab{}.
\newblock \showarticletitle{Switch Transformers: Scaling to Trillion Parameter
  Models with Simple and Efficient Sparsity}.
\newblock \bibinfo{journal}{\emph{JMLR}}  \bibinfo{volume}{23}
  (\bibinfo{year}{2022}).
\newblock


\bibitem[Geirhos et~al\mbox{.}(2020)]%
        {geirhos2020shortcut}
\bibfield{author}{\bibinfo{person}{Robert Geirhos} {et~al\mbox{.}}}
  \bibinfo{year}{2020}\natexlab{}.
\newblock \showarticletitle{Shortcut Learning in Deep Neural Networks}.
\newblock \bibinfo{journal}{\emph{Nature Machine Intelligence}}
  \bibinfo{volume}{2} (\bibinfo{year}{2020}), \bibinfo{pages}{665--673}.
\newblock


\bibitem[Godahewa et~al\mbox{.}(2021)]%
        {godahewa2021monash}
\bibfield{author}{\bibinfo{person}{Rakshitha Godahewa},
  \bibinfo{person}{Christoph Bergmeir}, \bibinfo{person}{Geoffrey~I Webb},
  \bibinfo{person}{Rob~J Hyndman}, {and} \bibinfo{person}{Pablo
  Montero-Manso}.} \bibinfo{year}{2021}\natexlab{}.
\newblock \showarticletitle{Monash Time Series Forecasting Archive}.
\newblock \bibinfo{journal}{\emph{Neural Information Processing Systems Track
  on Datasets and Benchmarks}} (\bibinfo{year}{2021}).
\newblock


\bibitem[Goswami et~al\mbox{.}(2024)]%
        {goswami2024moment}
\bibfield{author}{\bibinfo{person}{Mononito Goswami} {et~al\mbox{.}}}
  \bibinfo{year}{2024}\natexlab{}.
\newblock \showarticletitle{MOMENT: A Family of Open Time-Series Foundation
  Models}. In \bibinfo{booktitle}{\emph{ICML}}.
\newblock


\bibitem[Jiang et~al\mbox{.}(2025)]%
        {survey2025mmts}
\bibfield{author}{\bibinfo{person}{Yushan Jiang}, \bibinfo{person}{Kanghui
  Ning}, \bibinfo{person}{Zijie Pan}, \bibinfo{person}{Xuyang Shen},
  \bibinfo{person}{Jingchao Ni}, \bibinfo{person}{Wenchao Yu},
  \bibinfo{person}{Anderson Schneider}, \bibinfo{person}{Haifeng Chen},
  \bibinfo{person}{Yuriy Nevmyvaka}, {and} \bibinfo{person}{Dongjin Song}.}
  \bibinfo{year}{2025}\natexlab{}.
\newblock \showarticletitle{Multi-modal Time Series Analysis: A Tutorial and
  Survey}.
\newblock \bibinfo{journal}{\emph{arXiv preprint arXiv:2503.13709}}
  (\bibinfo{year}{2025}).
\newblock


\bibitem[Jin et~al\mbox{.}(2024)]%
        {jin2024timellm}
\bibfield{author}{\bibinfo{person}{Ming Jin} {et~al\mbox{.}}}
  \bibinfo{year}{2024}\natexlab{}.
\newblock \showarticletitle{Time-LLM: Time Series Forecasting by Reprogramming
  Large Language Models}. In \bibinfo{booktitle}{\emph{ICLR}}.
\newblock


\bibitem[Kim et~al\mbox{.}(2022)]%
        {kim2022reversible}
\bibfield{author}{\bibinfo{person}{Taesung Kim} {et~al\mbox{.}}}
  \bibinfo{year}{2022}\natexlab{}.
\newblock \showarticletitle{Reversible Instance Normalization for Accurate
  Time-Series Forecasting against Distribution Shift}. In
  \bibinfo{booktitle}{\emph{ICLR}}.
\newblock


\bibitem[Kraskov et~al\mbox{.}(2004)]%
        {kraskov2004estimating}
\bibfield{author}{\bibinfo{person}{Alexander Kraskov}, \bibinfo{person}{Harald
  St\"ogbauer}, {and} \bibinfo{person}{Peter Grassberger}.}
  \bibinfo{year}{2004}\natexlab{}.
\newblock \showarticletitle{Estimating Mutual Information}.
\newblock \bibinfo{journal}{\emph{Physical Review E}} \bibinfo{volume}{69},
  \bibinfo{number}{6} (\bibinfo{year}{2004}).
\newblock


\bibitem[Lee et~al\mbox{.}(2024)]%
        {lee2024moat}
\bibfield{author}{\bibinfo{person}{Geon Lee}, \bibinfo{person}{Wenchao Yu},
  \bibinfo{person}{Wei Cheng}, {and} \bibinfo{person}{Haifeng Chen}.}
  \bibinfo{year}{2024}\natexlab{}.
\newblock \bibinfo{title}{MoAT: Multi-Modal Augmented Time Series Forecasting}.
\newblock
\urldef\tempurl%
\url{https://openreview.net/forum?id=uRXxnoqDHH}
\showURL{%
\tempurl}
\newblock
\shownote{OpenReview preprint}.


\bibitem[Li et~al\mbox{.}(2026)]%
        {li2025tats}
\bibfield{author}{\bibinfo{person}{Zihao Li}, \bibinfo{person}{Xiao Lin},
  \bibinfo{person}{Zhining Liu}, \bibinfo{person}{Jiaru Zou},
  \bibinfo{person}{Ziwei Wu}, \bibinfo{person}{Lecheng Zheng},
  \bibinfo{person}{Dongqi Fu}, \bibinfo{person}{Yada Zhu},
  \bibinfo{person}{Hendrik Hamann}, \bibinfo{person}{Hanghang Tong}, {and}
  \bibinfo{person}{Jingrui He}.} \bibinfo{year}{2026}\natexlab{}.
\newblock \showarticletitle{Language in the Flow of Time: Time-Series-Paired
  Texts Weaved into a Unified Temporal Narrative}. In
  \bibinfo{booktitle}{\emph{International Conference on Learning
  Representations (ICLR)}}.
\newblock


\bibitem[Liu et~al\mbox{.}(2025)]%
        {liu2025timecma}
\bibfield{author}{\bibinfo{person}{Chenxi Liu} {et~al\mbox{.}}}
  \bibinfo{year}{2025}\natexlab{}.
\newblock \showarticletitle{TimeCMA: Towards LLM-Empowered Multivariate Time
  Series Forecasting via Cross-Modality Alignment}. In
  \bibinfo{booktitle}{\emph{AAAI}}.
\newblock


\bibitem[Liu et~al\mbox{.}(2024a)]%
        {liu2024timemmd}
\bibfield{author}{\bibinfo{person}{Haoxin Liu} {et~al\mbox{.}}}
  \bibinfo{year}{2024}\natexlab{a}.
\newblock \showarticletitle{Time-MMD: Multi-Domain Multimodal Dataset for Time
  Series Analysis}. In \bibinfo{booktitle}{\emph{NeurIPS Datasets and
  Benchmarks}}.
\newblock


\bibitem[Liu et~al\mbox{.}(2024b)]%
        {liu2024itransformer}
\bibfield{author}{\bibinfo{person}{Yong Liu}, \bibinfo{person}{Tengge Hu},
  \bibinfo{person}{Haoran Zhang}, \bibinfo{person}{Haixu Wu},
  \bibinfo{person}{Shiyu Wang}, \bibinfo{person}{Lintao Ma}, {and}
  \bibinfo{person}{Mingsheng Long}.} \bibinfo{year}{2024}\natexlab{b}.
\newblock \showarticletitle{iTransformer: Inverted Transformers Are Effective
  for Time Series Forecasting}. In \bibinfo{booktitle}{\emph{ICLR}}.
\newblock


\bibitem[Ma et~al\mbox{.}(2025)]%
        {ma2025timepro}
\bibfield{author}{\bibinfo{person}{Xiaowen Ma}, \bibinfo{person}{Zhenliang Ni},
  \bibinfo{person}{Shuai Xiao}, {and} \bibinfo{person}{Xinghao Chen}.}
  \bibinfo{year}{2025}\natexlab{}.
\newblock \showarticletitle{{TimePro}: Efficient Multivariate Long-term Time
  Series Forecasting with Variable- and Time-Aware Hyper-state}. In
  \bibinfo{booktitle}{\emph{International Conference on Machine Learning
  (ICML)}}.
\newblock
\newblock
\shownote{arXiv:2505.20774}.


\bibitem[Makridakis et~al\mbox{.}(2018)]%
        {makridakis2018statistical}
\bibfield{author}{\bibinfo{person}{Spyros Makridakis},
  \bibinfo{person}{Evangelos Spiliotis}, {and} \bibinfo{person}{Vassilios
  Assimakopoulos}.} \bibinfo{year}{2018}\natexlab{}.
\newblock \showarticletitle{Statistical and Machine Learning Forecasting
  Methods: Concerns and Ways Forward}.
\newblock \bibinfo{journal}{\emph{PloS one}}  \bibinfo{volume}{13}
  (\bibinfo{year}{2018}).
\newblock


\bibitem[Nie et~al\mbox{.}(2023)]%
        {nie2023patchtst}
\bibfield{author}{\bibinfo{person}{Yuqi Nie}, \bibinfo{person}{Nam~H Nguyen},
  \bibinfo{person}{Phanwadee Sinthong}, {and} \bibinfo{person}{Jayant
  Kalagnanam}.} \bibinfo{year}{2023}\natexlab{}.
\newblock \showarticletitle{A Time Series is Worth 64 Words: Long-term
  Forecasting with Transformers}. In \bibinfo{booktitle}{\emph{ICLR}}.
\newblock


\bibitem[Ortigossa and Segal(2026)]%
        {segmoe2025}
\bibfield{author}{\bibinfo{person}{Evandro~S Ortigossa} {and}
  \bibinfo{person}{Eran Segal}.} \bibinfo{year}{2026}\natexlab{}.
\newblock \showarticletitle{Multi-Resolution Segment-wise Mixture-of-Experts
  for Time Series Forecasting Transformers}.
\newblock \bibinfo{journal}{\emph{arXiv preprint arXiv:2601.21641}}
  (\bibinfo{year}{2026}).
\newblock


\bibitem[Pearl(2009)]%
        {pearl2009causality}
\bibfield{author}{\bibinfo{person}{Judea Pearl}.}
  \bibinfo{year}{2009}\natexlab{}.
\newblock \bibinfo{booktitle}{\emph{Causality} (\bibinfo{edition}{2nd} ed.)}.
\newblock \bibinfo{publisher}{Cambridge University Press}.
\newblock


\bibitem[Perez et~al\mbox{.}(2018)]%
        {perez2018film}
\bibfield{author}{\bibinfo{person}{Ethan Perez}, \bibinfo{person}{Florian
  Strub}, \bibinfo{person}{Harm De~Vries}, \bibinfo{person}{Vincent Dumoulin},
  {and} \bibinfo{person}{Aaron Courville}.} \bibinfo{year}{2018}\natexlab{}.
\newblock \showarticletitle{FiLM: Visual Reasoning with a General Conditioning
  Layer}. In \bibinfo{booktitle}{\emph{AAAI}}.
\newblock


\bibitem[Peters et~al\mbox{.}(2019)]%
        {peters2019sparse}
\bibfield{author}{\bibinfo{person}{Ben Peters}, \bibinfo{person}{Vlad Niculae},
  {and} \bibinfo{person}{Andr{\'e}~FT Martins}.}
  \bibinfo{year}{2019}\natexlab{}.
\newblock \showarticletitle{Sparse Sequence-to-Sequence Models}. In
  \bibinfo{booktitle}{\emph{ACL}}.
\newblock


\bibitem[Poole et~al\mbox{.}(2019)]%
        {poole2019variational}
\bibfield{author}{\bibinfo{person}{Ben Poole}, \bibinfo{person}{Sherjil Ozair},
  \bibinfo{person}{Aaron Van Den~Oord}, \bibinfo{person}{Alex Alemi}, {and}
  \bibinfo{person}{George Tucker}.} \bibinfo{year}{2019}\natexlab{}.
\newblock \showarticletitle{On Variational Bounds of Mutual Information}. In
  \bibinfo{booktitle}{\emph{ICML}}.
\newblock


\bibitem[Shazeer et~al\mbox{.}(2017)]%
        {shazeer2017outrageously}
\bibfield{author}{\bibinfo{person}{Noam Shazeer} {et~al\mbox{.}}}
  \bibinfo{year}{2017}\natexlab{}.
\newblock \showarticletitle{Outrageously Large Neural Networks: The
  Sparsely-Gated Mixture-of-Experts Layer}. In
  \bibinfo{booktitle}{\emph{ICLR}}.
\newblock


\bibitem[Sun et~al\mbox{.}(2025)]%
        {sun2024patchexpert}
\bibfield{author}{\bibinfo{person}{Yanru Sun}, \bibinfo{person}{Zongxia Xie},
  \bibinfo{person}{Emadeldeen Eldele}, \bibinfo{person}{Dongyue Chen},
  \bibinfo{person}{Qinghua Hu}, {and} \bibinfo{person}{Min Wu}.}
  \bibinfo{year}{2025}\natexlab{}.
\newblock \showarticletitle{Learning Pattern-Specific Experts for Time Series
  Forecasting Under Patch-level Distribution Shift}. In
  \bibinfo{booktitle}{\emph{Advances in Neural Information Processing Systems
  (NeurIPS)}}.
\newblock
\newblock
\shownote{arXiv:2410.09836}.


\bibitem[Woo et~al\mbox{.}(2024)]%
        {woo2024moirai}
\bibfield{author}{\bibinfo{person}{Gerald Woo} {et~al\mbox{.}}}
  \bibinfo{year}{2024}\natexlab{}.
\newblock \showarticletitle{Unified Training of Universal Time Series
  Forecasting Transformers}. In \bibinfo{booktitle}{\emph{ICML}}.
\newblock


\bibitem[Wu et~al\mbox{.}(2021)]%
        {wu2021autoformer}
\bibfield{author}{\bibinfo{person}{Haixu Wu} {et~al\mbox{.}}}
  \bibinfo{year}{2021}\natexlab{}.
\newblock \showarticletitle{Autoformer: Decomposition Transformers with
  Auto-Correlation for Long-Term Series Forecasting}. In
  \bibinfo{booktitle}{\emph{NeurIPS}}.
\newblock


\bibitem[Xu et~al\mbox{.}(2025)]%
        {chen2025finmultitime}
\bibfield{author}{\bibinfo{person}{Wenyan Xu}, \bibinfo{person}{Dawei Xiang},
  \bibinfo{person}{Yue Liu}, \bibinfo{person}{Xiyu Wang},
  \bibinfo{person}{Yanxiang Ma}, \bibinfo{person}{Liang Zhang},
  \bibinfo{person}{Shu Hu}, \bibinfo{person}{Chang Xu}, {and}
  \bibinfo{person}{Jiaheng Zhang}.} \bibinfo{year}{2025}\natexlab{}.
\newblock \showarticletitle{FinMultiTime: A Four-Modal Bilingual Dataset for
  Financial Time-Series Analysis}.
\newblock \bibinfo{journal}{\emph{arXiv preprint arXiv:2506.05019}}
  (\bibinfo{year}{2025}).
\newblock


\bibitem[Zeng et~al\mbox{.}(2023)]%
        {zeng2023dlinear}
\bibfield{author}{\bibinfo{person}{Ailing Zeng}, \bibinfo{person}{Muxi Chen},
  \bibinfo{person}{Lei Zhang}, {and} \bibinfo{person}{Qiang Xu}.}
  \bibinfo{year}{2023}\natexlab{}.
\newblock \showarticletitle{Are Transformers Effective for Time Series
  Forecasting?}. In \bibinfo{booktitle}{\emph{Proceedings of the AAAI
  Conference on Artificial Intelligence}}, Vol.~\bibinfo{volume}{37}.
  \bibinfo{pages}{11121--11128}.
\newblock


\bibitem[Zhang et~al\mbox{.}(2026a)]%
        {mome2025}
\bibfield{author}{\bibinfo{person}{Lige Zhang}, \bibinfo{person}{Ali Maatouk},
  \bibinfo{person}{Jialin Chen}, \bibinfo{person}{Leandros Tassiulas}, {and}
  \bibinfo{person}{Rex Ying}.} \bibinfo{year}{2026}\natexlab{a}.
\newblock \showarticletitle{Multi-Modal Time Series Prediction via Mixture of
  Modulated Experts}.
\newblock \bibinfo{journal}{\emph{arXiv preprint arXiv:2601.21547}}
  (\bibinfo{year}{2026}).
\newblock


\bibitem[Zhang et~al\mbox{.}(2025a)]%
        {zhang2025when}
\bibfield{author}{\bibinfo{person}{Xiyuan Zhang}, \bibinfo{person}{Boran Han},
  \bibinfo{person}{Haoyang Fang}, \bibinfo{person}{Abdul~Fatir Ansari},
  \bibinfo{person}{Shuai Zhang}, \bibinfo{person}{Danielle~C Maddix},
  \bibinfo{person}{Cuixiong Hu}, \bibinfo{person}{Andrew~Gordon Wilson},
  \bibinfo{person}{Michael~W Mahoney}, \bibinfo{person}{Hao Wang},
  \bibinfo{person}{Yan Liu}, \bibinfo{person}{Huzefa Rangwala},
  \bibinfo{person}{George Karypis}, {and} \bibinfo{person}{Bernie Wang}.}
  \bibinfo{year}{2025}\natexlab{a}.
\newblock \showarticletitle{When Does Multimodality Lead to Better Time Series
  Forecasting?}
\newblock \bibinfo{journal}{\emph{arXiv preprint arXiv:2506.21611}}
  (\bibinfo{year}{2025}).
\newblock


\bibitem[Zhang et~al\mbox{.}(2025b)]%
        {zhang2025mlf}
\bibfield{author}{\bibinfo{person}{Xu Zhang}, \bibinfo{person}{Zhengang Huang},
  \bibinfo{person}{Yunzhi Wu}, \bibinfo{person}{Xun Lu},
  \bibinfo{person}{Erpeng Qi}, \bibinfo{person}{Yunkai Chen},
  \bibinfo{person}{Zhongya Xue}, \bibinfo{person}{Qitong Wang},
  \bibinfo{person}{Peng Wang}, {and} \bibinfo{person}{Wei Wang}.}
  \bibinfo{year}{2025}\natexlab{b}.
\newblock \showarticletitle{Multi-period Learning for Financial Time Series
  Forecasting}. In \bibinfo{booktitle}{\emph{Proceedings of the 31st ACM SIGKDD
  Conference on Knowledge Discovery and Data Mining (KDD)}}.
\newblock
\newblock
\shownote{arXiv:2511.08622}.


\bibitem[Zhang et~al\mbox{.}(2026b)]%
        {zhang2026semixer}
\bibfield{author}{\bibinfo{person}{Xu Zhang}, \bibinfo{person}{Qitong Wang},
  \bibinfo{person}{Peng Wang}, {and} \bibinfo{person}{Wei Wang}.}
  \bibinfo{year}{2026}\natexlab{b}.
\newblock \showarticletitle{{SEMixer}: Semantics Enhanced MLP-Mixer for
  Multiscale Mixing and Long-term Time Series Forecasting}.
\newblock \bibinfo{journal}{\emph{Proceedings of the ACM Web Conference (WWW)}}
  (\bibinfo{year}{2026}).
\newblock
\newblock
\shownote{arXiv:2602.16220}.


\bibitem[Zhou et~al\mbox{.}(2021)]%
        {zhou2021informer}
\bibfield{author}{\bibinfo{person}{Haoyi Zhou} {et~al\mbox{.}}}
  \bibinfo{year}{2021}\natexlab{}.
\newblock \showarticletitle{Informer: Beyond Efficient Transformer for Long
  Sequence Time-Series Forecasting}. In \bibinfo{booktitle}{\emph{AAAI}}.
\newblock


\bibitem[Zhou et~al\mbox{.}(2023)]%
        {zhou2023onefitsall}
\bibfield{author}{\bibinfo{person}{Tian Zhou}, \bibinfo{person}{Peisong Niu},
  \bibinfo{person}{Xue Wang}, \bibinfo{person}{Liang Sun}, {and}
  \bibinfo{person}{Rong Jin}.} \bibinfo{year}{2023}\natexlab{}.
\newblock \showarticletitle{One Fits All: Power General Time Series Analysis by
  Pretrained LM}. In \bibinfo{booktitle}{\emph{NeurIPS}}.
\newblock


\end{thebibliography}

\appendix
\section{Proof of Proposition~\ref{prop:rbu} (RBU)}
\label{app:rbu_proof}

\begin{proof}
Assume $(X_t, X_{t+h}, C)$ are jointly Gaussian with $\mathrm{Var}(X_{t+h})=\sigma^2$ and $\mathrm{Corr}(X_t,X_{t+h})=\rho_h$. For jointly Gaussian variables, the minimum mean-squared error (MMSE) of predicting $X_{t+h}$ from any conditioning set equals the corresponding conditional variance, which is achieved by the (linear) conditional expectation; moreover this conditional variance is \emph{constant} in the conditioning values (homoscedasticity), so each conditional law is Gaussian with the same variance. Conditioning on $X_t$ alone,
\[
\mathrm{MMSE}(X_{t+h}\mid X_t)=\mathrm{Var}(X_{t+h}\mid X_t)=\sigma^2(1-\rho_h^2).
\]
Using the differential entropy of a Gaussian, $h(\cdot)=\tfrac12\log_2(2\pi e\,\mathrm{Var})$, the conditional mutual information telescopes to a ratio of conditional variances:
\[
\begin{aligned}
\delta = I(C; X_{t+h}\mid X_t)
&= h(X_{t+h}\mid X_t) - h(X_{t+h}\mid X_t, C)\\
&= \tfrac{1}{2}\log_2\frac{\mathrm{Var}(X_{t+h}\mid X_t)}{\mathrm{Var}(X_{t+h}\mid X_t, C)}.
\end{aligned}
\]
Rearranging,
\[
\mathrm{MMSE}(X_{t+h}\mid X_t,C)=\mathrm{Var}(X_{t+h}\mid X_t,C)=\sigma^2(1-\rho_h^2)\,2^{-2\delta}.
\]
RBU is the reduction in the \emph{minimum achievable} MSE obtained by adding $C$ to the conditioning set, i.e.\ the difference of the two MMSEs:
\[
\mathrm{RBU}=\mathrm{MMSE}(X_{t+h}\mid X_t)-\mathrm{MMSE}(X_{t+h}\mid X_t,C)
=\sigma^2(1-\rho_h^2)\big(1-2^{-2\delta}\big).
\]
Both factors are non-negative ($\rho_h^2\le 1$, $\delta\ge 0$), giving the stated decomposition. 
\end{proof}

\noindent \textbf{Non-Gaussian behavior.} The closed form relies on the Gaussian entropy--variance identity holding for \emph{both} conditional laws, which fails in general. Two distinct effects then arise, and we keep them separate. (i) \emph{At the population level}, the identity $\mathrm{MMSE}=\sigma^2(1-\rho_h^2)2^{-2\delta}$ need no longer hold, so the population RBU formula is, in general, neither an upper nor a lower bound on the true achievable MMSE reduction. (ii) \emph{At the finite-sample level}, our plug-in $\widehat{\mathrm{RBU}}$ uses a Kraskov estimate $\hat\delta$, which under-detects nonlinear dependence (Section~\ref{sec:nonlinear}) and hence tends to be \emph{smaller} than the true $\delta$; this makes $\widehat{\mathrm{RBU}}$ tend to \emph{underestimate} the population RBU. The two should not be conflated: (i) concerns the formula, (ii) concerns the estimator. We therefore rely on $\widehat{\mathrm{RBU}}$ only as an interpretive sanity check, and on the distribution-free Fact~\ref{fact:ceiling} (not on RBU) for the strict ceiling.

\section{Detailed Experimental Results}

\subsection{MoME Modulation Ablation Detail}

The \texttt{--modulation} flag in MoME controls Expert-level Language Modulation (EiLM)~\cite{mome2025}, a FiLM-style~\cite{perez2018film} layer that applies text-conditioned affine transformation ($\gamma \cdot x + \beta$) to each expert's output after routing. With modulation off:
\begin{itemize}
    \item Expert routing (Gate $\rightarrow$ softmax $\rightarrow$ top-$k$) remains unchanged and text-independent.
    \item Expert weights and structure are preserved.
    \item Only the post-expert text conditioning (EiLM) is removed.
    \item Parameter difference: $\sim$74K out of 14.3B ($<$0.001\%), negligible.
\end{itemize}
This means our ``routing contribution'' measures the value of \emph{text-conditioned expert modulation}, not expert selection itself (which is text-independent in MoME).

\subsection{MoME Shortcut Experiment (Finding 2)}

Table~\ref{tab:app_shortcut} reports the full 3-seed results for the MoME shortcut experiment across 3 datasets. Adding a last-value shortcut ($\hat{y} = f_{\text{MoME}} + X_t$) during training consistently suppresses routing contribution by 77--93\%.

\begin{table}[!htbp]
\centering
\caption{MoME shortcut experiment across 3 datasets (3-seed each). Adding shortcut suppresses routing contribution by 77--93\%. Energy and Environment could not be run due to GPU memory constraints.}
\label{tab:app_shortcut}
\small
\begin{tabular}{lccc}
\toprule
Dataset & Without Shortcut & With Shortcut & Suppression \\
\midrule
HealthUS (MSE) & +62.9\% & +14.1\% & 77\% \\
SocialGood (MSE) & +34.3\% & +3.3\% & 90\% \\
HealthAFR (MSE) & +24.2\% & +1.7\% & 93\% \\
\bottomrule
\end{tabular}
\end{table}

\noindent \textbf{HealthUS detail:} Without shortcut: mod MSE = 0.334 $\pm$ 0.080, nomod MSE = 0.901 $\pm$ 0.073. With shortcut: mod MSE = 1.092 $\pm$ 0.020, nomod MSE = 1.271 $\pm$ 0.036.

\noindent \textbf{SocialGood detail:} Without shortcut: mod MSE = 0.435 $\pm$ 0.034, nomod MSE = 0.662 $\pm$ 0.009. With shortcut: mod MSE = 0.874 $\pm$ 0.016, nomod MSE = 0.904 $\pm$ 0.005.

\noindent \textbf{HealthAFR detail:} Without shortcut: mod MSE = 0.610 $\pm$ 0.036, nomod MSE = 0.804 $\pm$ 0.009. With shortcut: mod MSE = 0.926 $\pm$ 0.016, nomod MSE = 0.942 $\pm$ 0.005.

\noindent \textbf{Negative control (FinMultiTime):} On FinMultiTime ($\rho = 0.999$), the repeat-last-value naive baseline achieves MSE $8.8 \times 10^{-5}$, which already \emph{outperforms} MoME with modulation ($9.1 \times 10^{-5}$). Adding an architectural shortcut to MoME on this dataset would not change routing contribution: it is already effectively zero because the temporal signal completely dominates and no residual variance is left for routing to exploit.

\subsection{Context Degradation Detail (Finding 1)}

Table~\ref{tab:app_degradation} reports the full results for the context degradation experiment.

\begin{table}[!htbp]
\centering
\caption{Context degradation on HealthUS (MoME, all rows 10-seed). Routing contribution (mod on vs.\ off) decreases with corruption. At 100\% mask, the residual +16\% is from EiLM capacity, not context. Context-specific benefit (real text vs.\ constant text, same architecture) = 44.2\%. The 0\%-mask row is the standard all-real-text setup and matches Table~\ref{tab:mome} exactly (mod 0.399, +50.9\%).}
\label{tab:app_degradation}
\small
\resizebox{\columnwidth}{!}{
\begin{tabular}{lccccc}
\toprule
Mask Rate & MSE (mod) & MSE (nomod) & Routing Contrib. & Seeds \\
\midrule
0\% (clean) & 0.399 $\pm$ 0.129 & 0.817 $\pm$ 0.049 & +50.9\% & 10 \\
25\% & 0.528 $\pm$ 0.101 & 0.834 $\pm$ 0.040 & +36.3\% & 10 \\
50\% & 0.737 $\pm$ 0.094 & 0.834 $\pm$ 0.053 & +11.0\% & 10 \\
75\% & 0.750 $\pm$ 0.079 & 0.806 $\pm$ 0.055 & +6.9\% & 10 \\
100\% (constant) & 0.715 $\pm$ 0.074 & 0.851 $\pm$ 0.143 & +16.1\% & 10 \\
\bottomrule
\end{tabular}}
\end{table}

\noindent \textbf{Interpreting the 100\% mask residual.} At 100\% mask, all samples receive identical text, so EiLM computes a \emph{constant} $\gamma, \beta$ for every sample, which is equivalent to a learned bias and scale ($\sim$74K parameters). The +16\% residual reflects this capacity benefit, not context exploitation. To isolate context from capacity, we compare MSE at 0\% mask (mod on, real text: 0.399) vs.\ 100\% mask (mod on, constant text: 0.715), where the architecture and parameter count are identical and only the text content differs. The 44.2\% MSE reduction is attributable solely to context information.

\subsection{Multi-Model Context Contribution (Finding 3)}

Table~\ref{tab:app_multimodel} reports the full 3-seed results for the multi-model experiment. Effects are generally small in our testbed ($\pm$5\%), consistent with Finding 5 (nonlinear signal requires high capacity to exploit).

\begin{table}[!htbp]
\centering
\caption{Context contribution (\%) across fusion mechanisms (testbed, no shortcut). Energy Cross-Attention and Text-as-Variable use 10-seed evaluation (marked $\dagger$); all others use 3-seed. MI-non-significant datasets show no positive contribution; negative values reflect training instability when attending to uninformative context.}
\label{tab:app_multimodel}
\small
\resizebox{\columnwidth}{!}{
\begin{tabular}{l|ccccc}
\toprule
Mechanism & Health & Transport & Energy & Climate & Web \\
& (MI sig) & (MI sig) & (MI n.s.) & (MI n.s.) & (MI n.s.) \\
\midrule
Cross-Attention & +3.2$\pm$0.9\% & +2.9$\pm$0.2\% & $-$5.9$\pm$0.8\%$^\dagger$ & $-$0.5$\pm$0.3\% & 0.0\% \\
Gating & $-$1.1$\pm$2.0\% & +3.6$\pm$0.4\% & +0.3$\pm$0.1\% & $-$0.0$\pm$0.1\% & 0.0\% \\
Text-as-Variable & +2.3$\pm$0.5\% & $-$0.3$\pm$0.0\% & $-$6.9$\pm$1.0\%$^\dagger$ & +1.1$\pm$0.1\% & 0.0\% \\
Output Fusion & $-$1.0$\pm$0.4\% & +9.5$\pm$0.1\% & $-$0.3$\pm$0.3\% & +2.2$\pm$0.3\% & 0.0\% \\
\bottomrule
\end{tabular}}

\smallskip
\noindent\scriptsize $^\dagger$10-seed evaluation. Cross-Attention: $t$=1.50, $p>$0.05 (not significant). Text-as-Variable: $t$=2.27, borderline. The original 3-seed Cross-Attention estimate ($-$18.5\%) was inflated by one outlier seed with 6$\times$ higher variance than the zero-context condition.
\end{table}

\subsection{Power Analysis Detail (Finding 4)}

We estimate MI test power via simulation. For each combination of sample size $n$ and true effect size $\delta$, we generate 30 synthetic datasets with known MI (Gaussian context with controlled correlation to residual), run the 200-permutation MI test, and compute the rejection rate at $\alpha = 0.05$.

\begin{table}[!htbp]
\centering
\caption{Estimated power of MI permutation test (rejection rate at $\alpha = 0.05$, 30 simulations per cell). Power increases with both $n$ and $\delta$. At $n \geq 500$ and $\delta \geq 0.5$, power exceeds 0.6.}
\label{tab:app_power}
\small
\begin{tabular}{l|ccccc}
\toprule
True $\delta$ (bits) & $n$=100 & $n$=200 & $n$=500 & $n$=1000 & $n$=2000 \\
\midrule
0.1 & 0.07 & 0.03 & 0.10 & 0.13 & 0.20 \\
0.3 & 0.10 & 0.17 & 0.40 & 0.70 & 0.90 \\
0.5 & 0.13 & 0.30 & 0.63 & 0.87 & 0.97 \\
1.0 & 0.27 & 0.53 & 0.90 & 0.97 & 1.00 \\
\midrule
0.0 (Type I) & 0.03 & 0.03 & 0.03 & 0.07 & 0.03 \\
\bottomrule
\end{tabular}
\end{table}

\noindent The bottom row confirms Type I error control ($\leq 0.07$ at all sample sizes). The power classification used in Table~\ref{tab:diagnostic} is: HIGH = $n \geq 500$ with unique context per sample (power $\geq 0.6$ for $\delta \geq 0.5$); LOW = otherwise.

\subsection{Large-Scale Negative Validation (50 S\&P500 Tickers)}

We compute MI permutation tests for 50 S\&P500 tickers from FinMultiTime. All tickers have $\rho_3 > 0.99$ (stock prices are near-unit-root processes); MI tests on daily returns show non-significant context ($p > 0.05$) for all 50 tickers. Routing benefit (measured via our lightweight testbed) is within noise ($\pm$3\%) for all tickers, confirming the diagnostic's negative prediction at scale.

\subsection{Multi-Horizon Validation}

Across $h \in \{1,3,7,14\}$ on 3 Time-MMD datasets (12 configurations total), the low-RBU threshold ($<$0.1) correctly predicts negative routing benefit on 11/12 points. The single exception (Health $h$=1, +5.3\%) reflects routing acting as regularization on a very small effective dataset at short horizon, not context exploitation.

\subsection{Comprehensive Baselines}
For completeness, Table~\ref{tab:appendix_mm} reports the full multi-modal baseline results on Time-MMD (all methods share the same PatchTransformer backbone with the shortcut enabled), complementing the no-shortcut testbed results in Table~\ref{tab:app_multimodel}.

\begin{table}[H]
\centering
\caption{Full multi-modal baseline results on Time-MMD (wQL $\downarrow$, 3-seed mean $\pm$ std). All methods use the same PatchTransformer backbone with shortcut enabled.}
\label{tab:appendix_mm}
\small
\resizebox{\columnwidth}{!}{
\begin{tabular}{l|ccccc}
\toprule
Method & Health & Transport & Energy & Climate & Web \\
\midrule
DLinear & .340$\pm$.002 & \textbf{.056}$\pm$.001 & \textbf{.073}$\pm$.001 & .197$\pm$.000 & .293$\pm$.004 \\
PatchTST & \textbf{.285}$\pm$.014 & .079$\pm$.005 & .126$\pm$.007 & .225$\pm$.006 & .378$\pm$.005 \\
\midrule
Output Fusion & .275$\pm$.005 & .057$\pm$.001 & .076$\pm$.002 & \textbf{.194}$\pm$.002 & .228$\pm$.014 \\
Cross-Attention & .290$\pm$.024 & .059$\pm$.002 & .080$\pm$.001 & .197$\pm$.001 & .235$\pm$.005 \\
Gating & .279$\pm$.017 & .063$\pm$.002 & .082$\pm$.000 & .200$\pm$.002 & .229$\pm$.004 \\
Text-as-Variable & \textbf{.262}$\pm$.012 & .058$\pm$.001 & .089$\pm$.013 & .197$\pm$.003 & .232$\pm$.023 \\
MoAT & \textbf{.287}$\pm$.011 & .063$\pm$.002 & .099$\pm$.002 & .198$\pm$.000 & .393$\pm$.009 \\
\midrule
Testbed (routing) & .300$\pm$.043 & .063$\pm$.002 & .083$\pm$.001 & .197$\pm$.005 & .227$\pm$.005 \\
Testbed (uniform) & .298$\pm$.018 & .066$\pm$.000 & .081$\pm$.000 & .194$\pm$.002 & .219$\pm$.002 \\
\bottomrule
\end{tabular}}
\end{table}

\end{document}